\documentclass[12pt]{colt2019-clean}
\pdfoutput=1

\usepackage[utf8]{inputenc}
\usepackage[T1]{fontenc}
\usepackage{hyperref}
\hypersetup{
        colorlinks   = true,
        urlcolor     = blue,
        linkcolor    = blue,
        citecolor   = blue
}

\usepackage{url}
\usepackage{booktabs}
\usepackage{amsmath,amsfonts,amssymb}
\usepackage{nicefrac}
\usepackage{microtype}
\usepackage{natbib}

\usepackage{enumitem}
\usepackage{algorithm}
\usepackage{algorithmicx}
\usepackage{algpseudocode}
\usepackage[nolist,nohyperlinks]{acronym}
\usepackage{color}
\usepackage{bold-extra}
\usepackage{mathtools}
\usepackage{etoolbox}
\usepackage{graphicx}
\usepackage{times}

\renewcommand{\P}{\mathbb{P}}
\newcommand{\Pgamma}{\mathbb{P}_{\gamma}}
\newcommand{\pigamma}{\pi_{\gamma,r}}
\newcommand{\piinfty}{\pi_{\infty}}
\newcommand{\qgamma}{q_{\gamma}}

\newcommand{\bA}{\boldsymbol{A}}

\newcommand{\bx}{\boldsymbol{x}}
\newcommand{\bxi}{\boldsymbol{\xi}}

\newcommand{\bM}{\boldsymbol{M}}
\newcommand{\btilM}{\boldsymbol{\tilde{M}}}

\newcommand{\bI}{\boldsymbol{I}}
\newcommand{\bu}{\boldsymbol{u}}

\newcommand{\sI}{\mathcal{I}}

\newcommand{\bz}{\boldsymbol{z}}

\newcommand{\bw}{\boldsymbol{w}}

\newcommand{\bhatw}{\wh{\boldsymbol{w}}}
\newcommand{\bhatwl}{\bhatw_{\lambda}}

\newcommand{\bv}{\boldsymbol{v}}
\newcommand{\bzero}{\boldsymbol{0}}

\newcommand{\sC}{\mathcal{C}}

\newcommand{\sN}{\mathcal{N}}
\newcommand{\sM}{\mathcal{M}}
\newcommand{\sB}{\mathcal{B}}

\newcommand{\sEstar}{\mathcal{E}^{\star}}

\newcommand{\sX}{\mathcal{X}}

\newcommand{\sE}{\mathcal{E}}

\newcommand{\sD}{\mathcal{D}}

\newcommand{\sZ}{\mathcal{Z}}

\DeclareMathOperator*{\argmin}{arg\,min}
\DeclareMathOperator*{\arginf}{arg\,inf}

\DeclareMathOperator*{\rank}{rank}

\newcommand{\field}[1]{\mathbb{#1}}

\newcommand{\R}{\field{R}}

\newcommand{\scO}{\mathcal{O}}

\newcommand{\sctilO}{\mathcal{\tilde{O}}}

\newcommand{\wh}{\widehat}
\newcommand{\ve}{\varepsilon}

\newtheorem{assumption}{Assumption}
\newtheorem{cor}{Corollary}
\newtheorem{prop}{Proposition}

\newcommand{\reals}{\mathbb{R}}

\newcommand{\tp}{^{\top}}
\newcommand{\Srep}{{S^{(i)}}}

\newcommand{\Rhat}{\wh{R}}
\newcommand{\Rhatl}{\wh{R}_{S,\lambda}}

\newcommand{\Rl}{R_{\lambda}}

\newcommand{\Rstarl}{R^{\star}_{\lambda}}
\newcommand{\Rstarli}{R^{\star}_{\lambda,i}}
\newcommand{\Rstarlj}{R^{\star}_{\lambda,j}}

\DeclareMathOperator*{\E}{\mathbb{E}}

\newcommand*\diff{\mathop{}\!\mathrm{d}}
\newcommand{\lmin}{\lambda_{\mathrm{min}}}

\newcommand{\lmini}{\lambda_{\mathrm{min},i}}

\newcommand{\tr}{\mathrm{tr}}
\newcommand{\KL}{\mathrm{KL}}

\newcommand{\pr}[1]{\left( #1 \right)}
\newcommand{\br}[1]{\left[ #1 \right]}
\newcommand{\cbr}[1]{\left\{ #1 \right\}}

\newcommand{\ps}{\wh{p}_{S,\gamma}}
\newcommand{\psng}{\wh{p}_{S}}
\newcommand{\psrep}{\wh{p}_{\Srep}}
\newcommand{\pscondA}{\wh{p}_{S \mid A}}
\newcommand{\psrepcondA}{\wh{p}_{\Srep \mid A}}
\newcommand{\pscond}{\wh{p}_{S \mid \sEstar}}
\newcommand{\qcond}{q_{\mid \sEstar}}

\newcommand{\leqC}{\lesssim}

\newcommand{\excess}{\Delta}

\newcommand{\glob}{^{\text{\scshape{glob}}}}

\newcommand{\subopt}{^{\text{\scshape{subopt}}}}

\newcommand{\polylog}{\text{polylog}}

\newcommand{\bwstar}{\bw^{\star}}
\newcommand{\bwstarl}{\bw^{\star}_{\lambda}}
\newcommand{\bwstarli}{\bw^{\star}_{\lambda, i}}
\newcommand{\bwstarlj}{\bw^{\star}_{\lambda, j}}
\newcommand{\bwstarlI}{\bw^{\star}_{\lambda, I}}
\newcommand{\bHstar}{\boldsymbol{H}^{\star}}
\newcommand{\bHstarl}{\boldsymbol{H}^{\star}_{\lambda}}
\newcommand{\bHstarli}{\boldsymbol{H}^{\star}_{\lambda,i}}
\newcommand{\bHstarI}{\boldsymbol{H}^{\star}_{I}}

\newcommand{\bHstarlj}{\boldsymbol{H}^{\star}_{\lambda,j}}
\newcommand{\bHstarlhalf}{\boldsymbol{H}^{\star \frac{1}{2}}_{\lambda}}
\newcommand{\bHstarlinv}{\boldsymbol{H}^{\star -1}_{\lambda}}
\newcommand{\bHstarlIinv}{\boldsymbol{H}^{\star -1}_{\lambda,I}}

\newcommand{\Lip}{L^{\star}}
\newcommand{\Lipi}{L^{\star}_i}

\newcommand{\mincomp}{\sC^{\star}}
\newcommand{\hypsp}{\reals^d}
\begin{acronym}
\acro{IC}{Improvement Condition}
\acro{RLS}{Regularized Least Squares}
\acro{TL}{Transfer Learning}
\acro{HTL}{Hypothesis Transfer Learning}
\acro{ERM}{Empirical Risk Minimization}
\acro{TEAM}{Target Empirical Accuracy Maximization}
\acro{RKHS}{Reproducing kernel Hilbert space}
\acro{DA}{Domain Adaptation}
\acro{LOO}{Leave-One-Out}
\acro{HP}{High Probability}
\acro{RSS}{Regularized Subset Selection}
\acro{FR}{Forward Regression}
\acro{PSD}{Positive Semi-Definite}
\acro{SGD}{Stochastic Gradient Descent}
\acro{OGD}{Online Gradient Descent}
\acro{EWA}{Exponentially Weighted Average}
\acro{EMD}{Effective Metric Dimension}
\acro{PDE}{Partial Differential Equation}
\acro{SDE}{Stochastic Differential Equation}
\acro{SGLD}{Stochastic Gradient Langevin Dynamics}
\end{acronym}

\setlength{\marginparwidth}{16ex}

\title[Distribution-Dependent Analysis of Gibbs-ERM Principle]{Distribution-Dependent Analysis of Gibbs-ERM Principle}

\title{Distribution-Dependent Analysis of Gibbs-ERM Principle}

\coltauthor{%
 \Name{Ilja Kuzborskij } \Email{ilja.kuzborskij@gmail.com}\\
 \addr Universit\`{a} degli Studi di Milano
 \AND
 \Name{Nicol\`{o} Cesa-Bianchi} \Email{nicolo.cesa-bianchi@unimi.it}\\
 \addr Universit\`{a} degli Studi di Milano
 \AND
 \Name{Csaba Szepesv\'ari} \Email{szepi@google.com}\\
 \addr DeepMind, London
}

\begin{document}

\maketitle

\begin{abstract}

Gibbs-ERM learning is a natural idealized model of learning with stochastic optimization algorithms (such as \acl{SGLD} and ---to some extent--- \acl{SGD}), while it also arises in other contexts, including PAC-Bayesian theory, and sampling mechanisms. In this work we study the excess risk suffered by a Gibbs-ERM learner that uses non-convex, regularized empirical risk with the goal to understand the interplay between the data-generating distribution and learning in large hypothesis spaces. 
Our main results are
  \emph{distribution-dependent} upper bounds on several notions of excess risk.
  We show that, in all cases, the distribution-dependent excess risk is essentially controlled by the \emph{effective dimension}
  $\tr\pr{\bHstar (\bHstar + \lambda \bI)^{-1}}$ of the problem,
  where $\bHstar$ is the Hessian matrix of the risk at a local minimum.
  This is a well-established notion of effective dimension appearing in several previous works, including the analyses of SGD and ridge regression, but ours is the first work that brings this dimension to the analysis of learning using Gibbs densities.
  The distribution-dependent view we advocate here improves upon earlier results of~\cite{raginsky2017nonconvex}, and can yield much tighter bounds depending on the interplay between the data-generating distribution and the loss function.
  The first part of our analysis focuses on the \emph{localized} excess risk in the vicinity of a fixed local minimizer.
  This result is then extended to bounds on the \emph{global} excess risk, by characterizing probabilities of local minima (and their complement) under Gibbs densities, a results which might be of independent interest.
\end{abstract}

\section{Introduction}

In the parametric setting of statistical learning, the learner is given a tuple $S = \pr{z_1, \ldots, z_m}$ of \emph{training examples}, that are drawn independently of each other from a fixed and unknown probability distribution $\sD$ supported on an \emph{example space} $\sZ$. Based on the training examples $S$ the learner selects a model $\bw$ from a parameter space $\hypsp$. 
The learner's goal is to minimize the \emph{statistical risk} $R(\bw) = \E_z[\ell(\bw, z)]$ of the selected model, where $z$ is drawn from $\sD$ and $\ell \,:\, \hypsp \times \sZ \to [0, M]$ is some known \emph{loss function}, which we assume to be non-negative, bounded, and twice differentiable.

A learner following the \acf{ERM} principle selects a model with the smallest empirical risk. Learners often also incorporate a penalty, leading to selecting a model from the set
\begin{equation}
  \label{eq:erm}
  \argmin_{\bw \in \hypsp}\cbr{ \Rhat_S(\bw) + \lambda \|\bw\|^2}\,, \qquad \lambda > 0\,,
\end{equation}
where $\Rhat_S(\bw)$ is the \emph{empirical risk} of $\bw$, defined by
\[
  \Rhat_S(\bw) = \frac{1}{m} \sum_{i=1}^m \ell(\bw, z_i)\,.
\]
In the following, we will abbreviate the regularized empirical risk by $\Rhatl$ and its population counterpart, $\E[\Rhatl]$,  by $\Rl$.
In this paper we study a randomized version of \ac{ERM} known as \emph{Gibbs-ERM}.
A Gibbs algorithm outputs a model $\bw \in \hypsp$ sampled from the \emph{Gibbs density}
\begin{equation}
  \label{eq:gibbs}
  \ps(\bw) = \frac{1}{Z} \, e^{-\gamma \pr{ \Rhat_S(\bw) + \lambda \|\bw\|^2 }}\,, \qquad \gamma > 0\,,
\end{equation}
where $Z = \int_{\hypsp} e^{-\gamma \pr{ \Rhat_S(\bw) + \lambda \|\bw\|^2 }} \diff \bw$ is the normalization constant and $\Rhat_S$ is assumed to be such that $Z < \infty$ (for instance, this is the case when $\|\cdot\|$ is any norm and $\Rhat_S$ is nonnegative). 
It is not hard to see that we obtain \ac{ERM} as a special case of~\eqref{eq:gibbs} for $\gamma \to \infty$.
In the following expectations $\E[\cdot]$ are taken with respect to the joint distribution over the sample space $\sZ^m \times \reals^d$ (i.e., the product of the example space and the parameter space) unless explicitly stated otherwise, for instance $\E_{\bw \sim \ps}[\cdot]$.

Gibbs-ERM reveals its usefulness when the regularized empirical risk is non-convex and~\eqref{eq:erm} becomes intractable. This scenario brings out the connections between Gibbs-ERM and stochastic optimization algorithms, for instance \ac{SGLD} ---see below, along with a number of other settings in which Gibbs-ERM arises naturally.
One tantalizing related line of research lies in understanding theoretical properties of learning in overparameterized problems, such as deep neural networks, through the prism of stochastic optimization, since in these settings \ac{SGD} and its variants become de facto method of choice.
We believe that Gibbs-ERM principle provides an opportunity for explaining some of the learning-theoretic phenomena in this area.

In this paper we focus on the statistical properties of the Gibbs-ERM by analyzing distribution-dependent excess risk bounds.
In particular, we give upper bounds on the excess risk that can be much smaller compared to the previous literature, for instance \ac{SGLD}~\citep{raginsky2017nonconvex}, depending on the interplay between the data-generating distribution and the loss function.

\paragraph{\emph{\ac{SGLD} algorithm.}}
The recent interest in stochastic gradient descent algorithms for non-convex optimization led to the study of a variant called \ac{SGLD}. Apart from its simplicity, \ac{SGLD} has amenable theoretical properties, such as asymptotic convergence to global minima and polynomial saddle-point escape times~\citep{ge2015escaping}. 
The update rule of plain \ac{SGLD} is
\[
  \bhatw_{ t + 1 } = \bhatw_{ t } - \eta \nabla \Rhatl(\bhatw_t) + \sqrt { \frac{2 \eta}{\gamma} } \, \bxi_t \qquad t=0, 1, 2, \ldots
\]
where $\bhatw_1$ is sampled from a fixed distribution, $\bxi_t$ is a standard Gaussian ``noise'' vector (independent from the choice of $\bhatw_1$), and $\eta$ is a step size.
The \ac{SGLD} algorithm is known to approximate the continuous-time Langevin diffusion equation
\begin{equation}
  \label{eq:langevin}
  \diff \bw(t) = - \nabla \Rhatl(\bw(t)) \diff t + \sqrt { \frac{2}{\gamma} } \, \diff \boldsymbol{b}(t) , \quad t \geq 0
\end{equation}
where $\boldsymbol{b}(t)$ is the standard Brownian motion.
Indeed, under appropriate assumptions on the empirical risk, one can show that the solution to~\eqref{eq:langevin} admits~\eqref{eq:gibbs} as a stationary distribution~\citep{raginsky2017nonconvex,tzen2018local}.
While convergence in the limit is reassuring, the best known bound on the mixing time is of order $\polylog(1/\ve) \, e^{\sctilO(d)}$ for non-convex empirical risks assuming that objective is smooth and dissipative (roughly speaking, assuring that the process~\eqref{eq:langevin} on average moves towards the origin)~\citep{raginsky2017nonconvex,xu2018global}, 
and it is not clear whether the exponential dependence on the parameters can be eliminated.

\paragraph{\emph{Information Risk Minimization.}}
Gibbs-ERM naturally arises when introducing a relative entropy regularization in the so-called Information Risk Minimization framework~\citep{zhang2006information,xu2017information}. Indeed $\ps$ in~\eqref{eq:gibbs} can be equivalently defined as the solution to the following convex optimization problem
\begin{equation}
  \label{eq:IRM}
  \arginf_{\wh{p} \in \sM_1}\cbr{ \E_{\bw \sim \wh{p}}\br{\Rhat_S(\bw)} + \frac{1}{\gamma} \, \KL\pr{\wh{p} \ || \ \sN(\bzero, \lambda^{-1} \bI)} }\,,
\end{equation}
where $\sM_1$ is the set of all sample-dependent probability densities on $\hypsp$, with sample drawn from $\sD$, and $\KL$-divergence is defined between densities that are absolutely continuous with respect to some measure over $\hypsp$.
Problem~\eqref{eq:IRM} can be also motivated from perspective of the PAC-Bayesian analysis~\citep{mcallester1998some,seeger2002pac}, where~\eqref{eq:gibbs} is the density minimizing the bound on the expected risk.
Another instance of~\eqref{eq:IRM} is the well-known Maximum Entropy Discrimination framework of~\cite{jaakkola1999maximum}.
\paragraph{\emph{Sampling from~\eqref{eq:gibbs}.}}
Markov chain Monte Carlo (MCMC) algorithms can be used to sample directly from~\eqref{eq:gibbs}. Unfortunately, this is often known to be computationally inefficient~\citep{andrieu2003introduction}. On the other hand, there is a number of cases where MCMC demonstrates amenable computational properties, for instance when sampling from log-concave densities~\citep{andrieu2003introduction}.
Recent works have also showed that for a particular class of densities (such as smooth and strongly concave densities) variations of MCMC algorithms can sometimes achieve linear convergence~\citep{cheng2018underdamped},
for certain non-log-concave densities MCMC variants can achieve polynomial convergence in the dimension~\citep{cheng2018sharp}, or ever achieve faster convergence than optimization algorithms~\citep{ma2018sampling}.
Another popular line of research is a variational approximations of the Gibbs density, such as Variational Bayes~\citep{wang2018frequentist} where one resorts to the variational approximation of a target density.

\subsection{Our Contribution}
The algorithms discussed above perform randomized empirical risk minimization. 
However, minimizing empirical risk does not always lead to minimization of the risk. 
Hence, the quality of the solution $A(S)$ generated by the randomized algorithm $A$ given the training set $S$
is typically analyzed through the notion of \emph{excess risk} $\E_{S, A}\br{R(A(S))} - R(\bwstar)$, where $\bwstar$ is one of the minimizers of the risk. This is decomposed into the generalization error $R(A(S)) - \Rhat_S(A(S))$ and the term $\Rhat_S(A(S)) - R(\bwstar)$.
Similarly to~\cite{raginsky2017nonconvex}, we follow instead a Gibbs-centric decomposition of the excess risk
\[
  \E_{S, A}\br{R(A(S))} - R(\bwstar)
  =
  \underbrace{\E_{S, A, \bw \sim \ps}\br{R(A(S)) - R(\bw)}}_{\text{Computational excess risk}}
  +
  \underbrace{\E_{\bw \sim \ps}\br{R(\bw)} - R(\bwstar)}_{\text{Statistical excess risk}}\,.
\]
The first term is due solely to the dynamics of the algorithm, be it \ac{SGLD} or a sampling procedure, while the second one is a purely learning-theoretic quantity.
\cite{raginsky2017nonconvex} mainly focused on the finite-time analysis of the first term for \ac{SGLD} while showing convergence for non-convex objective functions.
Their analysis of the second term ---the statistical excess risk--- provides a bound of order (ignoring logarithmic factors)
\begin{equation}
  \label{eq:raginsky_excess}
  \frac{(\gamma + d)^2}{\lambda_{\star} m} + \frac{d}{\gamma}
\end{equation}
where $\lambda_{\star}$ is a positive spectral gap characterizing the exponential convergence rate of the Langevin diffusion to the stationary point. They conservatively bounded the reciprocal of $\lambda_{\star}$ as
\begin{equation}
  \label{eq:raginsky_spectral_gap}
  \frac{1}{\lambda_{\star}} = \sctilO \pr{ \frac { 1 } { \gamma ( d + \gamma ) } } + \pr{1 + \frac { d } { \gamma } } e ^ { \sctilO ( \gamma + d ) }~.
\end{equation}

This results in a statistical excess risk bound with a rather pessimistic exponential dependence on the ambient dimension $d$.
Therefore, a natural question to ask is whether the dependence on $d$ can be improved by taking into account specific properties of the learning problem,
and whether the dependence on $\lambda_{\star}$ can be avoided altogether (since the \emph{statistical} excess risk does not really depend on the convergence properties of \ac{SGLD}).
We believe that $\lambda_{\star}$ (which has exponential dependence on the dimension) can be avoided in the analysis of \cite{raginsky2017nonconvex}, although in their case this would not have improved the final result due to the contribution of the computational excess risk.

In this paper we consider the statistical excess risk, while we forego computational aspects of concrete algorithms.
In particular, we focus on the \emph{distribution-dependent} analysis of \emph{statistical excess risk} (or, simply, excess risk).
In the following we show upper bounds on the statistical excess risk that can be much smaller than~\eqref{eq:raginsky_excess} depending on the interplay between the data-generating distribution and the loss function.

The notion of excess risk considered in this paper is defined with respect to the \emph{regularized} minimizer of the risk
\begin{equation}
  \label{eq:regularized_risk}
  \bwstarl \in \argmin_{\bw \in \hypsp}\cbr{R(\bw) + \lambda \|\bw\|^2}~.
\end{equation}
Note that this is not a limitation because we can always recover the regularizer-free analysis by looking at the asymptotic behavior $\lambda \rightarrow 0$.
In the following, we assume that both risk and empirical risk are bounded and twice differentiable, and the Hessian matrix of the risk is locally-Lipschitz (in a sense precisely defined later on).
Therefore, the objective function of~\eqref{eq:regularized_risk} (as well as the one of~\eqref{eq:erm}) can have more than one minimum. However we assume that local minima are \emph{isolated}, meaning that a sufficiently small neighborhood of $\bwstarl$ contains a unique minimum.
One compelling example is a large family of non-convex \emph{strict-saddle} \ac{ERM} problems~\citep{ge2015escaping,gonen2017fast}, such as matrix completion, tensor decomposition, PCA, ICA, and others.
Another example of such is an empirical risk of a \emph{ReLU neural network} with weight decay, or L2 regularization,~\cite[Theorem 1]{milne2018piecewise} where minima resulting in a sufficiently small empirical risk are locally strongly-convex.
Even though the theorem holds for empirical measures, we suspect that it could be extended to the population risk through the uniform convergence argument.
\paragraph{Localized excess risk.}
Before delving into the global analysis of the excess risk, we look at the \emph{local} approximation properties of the Gibbs-ERM principle, which will also be instrumental in the forthcoming global analysis.
We begin by looking at the \emph{localized} excess risk with respect to a fixed  minimizer $\bwstarl$.
Specifically, we consider the excess risk of a parameter $\bw$ generated by Gibbs-ERM within a certain neighborhood around $\bwstarl$. This is defined as
\[
  \excess(\bwstarl) = \E_S\br{\E_{\bw \sim \ps}\br{R(\bw) - R(\bwstarl) \mid \bw \in \sEstar\pr{r}}} \,,
\]
where conditioning is taken on the event that $\bw$ lies in the ellipsoid $\sEstar\pr{r}$ of radius $r$ centered at the minimizer $\bwstarl$ and aligned with the curvature of the risk at that minimum.
We prove (Theorem~\ref{thm:gibbs_local_excess}) that
the local excess risk behaves as%
\footnote{Throughout this paper, we use $f \leqC g$ to say that there exists a universal constant $C > 0$ such that $f \leq C g$ holds uniformly over all arguments.}
\begin{equation}
  \label{eq:gibbs_local_excess}
  \excess(\bwstarl)
  \leqC
  \frac{1}{\gamma} \, \tr\pr{\bHstar \bHstarlinv}
  + \ve(r)
  + \sqrt{\gamma \ve(r)}
  + \frac{\gamma}{\sqrt{m}}
\end{equation}
where $\bHstar$ is the Hessian matrix $\nabla^2 R(\bwstarl)$, $\bHstarl = \bHstar + 2 \lambda \bI$, and $\ve(r)$ is a local approximation error that vanishes as $r\to 0$
(defined precisely in Section~\ref{sec:smooth_loss}).
The trace term in~\eqref{eq:gibbs_local_excess}, a distribution-dependent quantity known as the \emph{effective dimension} of minimizer $\bwstarl$, can be also expressed as $\lambda_1 / (\lambda_1 + \lambda) + \cdots + \lambda_d / (\lambda_d + \lambda)$ where $\lambda_1, \ldots, \lambda_d$ are the eigenvalues of $\bHstar$. This can be viewed as a ``soft'' version of the rank of $\bHstar$.
Note that $\tr\pr{\bHstar \bHstarlinv} \leq d$ always, and $\tr\pr{\bHstar \bHstarlinv} \ll d$ whenever the spectrum of the Hessian matrix is light-tailed.
This notion of effective dimension occasionally appears in the analysis of ridge regression~\citep{audibert2011robust,neu2018iterate}.

Next, to get a sense of the strength of the bound and as a sanity check, one may look at limiting cases
with respect to parameters $\lambda$ and $\gamma$.
When $\lambda \to 0$, corresponding to the unregularized Gibbs-ERM principle, our bound becomes 
\[
  \excess(\bwstar) \leqC \frac{1}{\gamma} \, \rank(\bHstar) + \text{poly}(m, \gamma, r, \lmin(\bHstar))
\]
where $\lmin(\bHstar)$ denotes the smallest non-zero eigenvalue of $\bHstar$.
Assuming the radius is set to $r = \gamma^{-\frac{1+p}{3}}$ for some $p > 0$, the polynomial term in the right-hand side of the above bound vanishes as $\gamma,m\to\infty$, even for singular $\bHstar$.
On the other hand, for $\lambda > 0$ and $\gamma,m\to\infty$
the right-hand side of~\eqref{eq:gibbs_local_excess} tends to zero, and the bound backs up the intuition that the Gibbs-ERM principle should exactly recover the \ac{ERM} solution. This observation also serves as a sanity check that the bound is reasonably tight, at least with respect to $\gamma$.

Finally, in cases when the Hessian matrix of the risk is constant, for instance in \ac{RLS} problems, $\ve(r) = 0$ and our bound specializes to
\begin{equation*}
  \excess(\bwstarl)
  \leqC
  \frac{1}{\gamma} \, \tr\pr{\bHstar \bHstarlinv}
  + \frac{\gamma}{\sqrt{m}}~.
\end{equation*}
When $\gamma$ is tuned optimally the above bound becomes $\excess(\bwstarl) \leqC m^{-\frac{1}{4}}\sqrt{\tr\pr{\bHstar \bHstarlinv}}$.
Note that, for the square loss, the best known dependence on $m$ that can be achieved is $m^{-\frac{1}{2}}$. The worse exponent in our bound is the price we pay for the generality of our approach.
Although our results are never worse in terms of the dimension, the dependence on the sample size in our bounds is worse than in those of~\cite[(3.27)]{raginsky2017nonconvex}.
This is because we are stating bounds in terms of the distribution-dependent effective dimension. We can obtain the same rate as~\cite{raginsky2017nonconvex} by expressing the effective dimension in terms of the empirical risk.

\paragraph{Global excess risk.}
Next, we consider a \emph{global} notion of excess risk,
\[
  \excess(\pi) = \E_S\br{\E_{\bw \sim \ps}\br{R(\bw)}} - \E_{I \sim \pi} \br{R(\bwstarlI)}\,.
\]
Here, in the second term $\pi$ is a distribution over the countable set $\sI$ of all minima (recall that minima are isolated). 
In this setting, all minima, Hessian matrices, approximation errors, and ellipsoids gain a corresponding subscript $i \in \sI$: $\bwstarli$, $\bHstarli$, $\ve_i(r)$, and $\sEstar_i(r)$.

We first focus on the finite-temperature distribution over minima of the regularized risk
\begin{equation}
  \label{eq:pi_gamma}
  \pigamma(i) = \frac{\Pgamma(\sEstar_i(r))}{\sum_{j \in \sI} \Pgamma(\sEstar_j(r)) }\,, \qquad i \in \sI\,,
\end{equation}
where probabilities are taken with respect to the population Gibbs density $p_{\gamma}(\bw) \propto e^{-\gamma \Rl(\bw)}$.
For this distribution we prove that
\begin{equation}
  \label{eq:gibbs_global_excess}
\excess(\pigamma)
  \leqC
  \frac{1}{\gamma} \, \E\br{\tr\pr{\bHstarI \bHstarlIinv}}
  + \frac{\gamma}{\sqrt{m}}
  + \E[\ve_I(r)] + \sqrt{\gamma \E[\ve_I(r)]} + \Pgamma(\mincomp(r))\,,
\end{equation}
for any $r < r_0$ where the radius $r_0$ is chosen such that all ellipsoids in the set $\big\{\sEstar_i(r) \,:\, i \in \sI\big\}$ are disjoint,
$\mincomp(r)$ is the \emph{complement} of the union of the ellipsoids in this set (i.e., the volume outside of minima), and the expectation is taken with respect to $I \sim \pigamma$.

Note that there is a trade-off in~\eqref{eq:gibbs_global_excess} between the first term, which is essentially a bound on the expected excess risk in the neighborhood of a minimum drawn according to $\pigamma$, and the last term, which is the probability of sampling outside of the neighborhood of any minimum.
This means that we can obtain an \emph{oracle inequality} by choosing $r \in [0, r_0]$ such that it minimizes the right-hand side of~\eqref{eq:gibbs_global_excess}.

Now we focus on the probability of the complement, which behaves as
\begin{equation}
  \label{eq:intro_prob_complement}
  \Pgamma(\mincomp(r)) \leq 1 - \pr{1 - d e^{-r^2 \gamma \alpha_{d/2}}} \sum_{i \in \sI} e^{-\frac{1}{3} \gamma \ve_i(r)}\,,
\end{equation}
where $\alpha_{d/2}$ depends only on $d$.
So, as long as $r^2 \gamma$ is increasing and $\gamma \ve_i(r)$ is non-increasing, the probability of generating a solution outside of the minima decreases.
For example, when $r = \gamma^{\frac{p-1}{2}}$ for $p \in (0, 1/3]$ (as discussed in Section~\ref{sec:global}) the right-hand side of~\eqref{eq:intro_prob_complement} vanishes as $\gamma \to \infty$.

\paragraph{Asymptotic pseudo excess risk.}
It is also natural to ask what happens in the zero-temperature regime $\gamma \to \infty$, when the Gibbs-ERM principle reduces to a rule for selecting empirical risk minimizers.
We can study this by observing that~\eqref{eq:gibbs_global_excess} vanishes when the radius is set to $r = \gamma^{\frac{p-1}{2}}$ ---as we previously discussed--- and $\gamma$ is set to $m^{\frac{1}{4}}$,
which is a meaningful result. Indeed, whenever $m = \infty$, then $\gamma = \infty$ and the risk of Gibbs-ERM should not differ from the risk of a minimum drawn from the limiting distribution $\piinfty = \lim_{\gamma \rightarrow \infty} \pigamma$.
Interestingly, the distribution $\piinfty$ has the following analytic form (this is shown in Lemma~\ref{lem:erm_prob} assuming the tuning $r = \gamma^{\frac{p-1}{2}}$):
\begin{equation}
\label{eq:pi_infty}
\piinfty(i) = \frac{1}{\sum_{j \in \sI\glob} \sqrt{\frac{\det(\bHstar_{\lambda, i})}{\det(\bHstar_{\lambda, j})} }}\,, 
	\qquad i \in \sI\glob
\end{equation}
where $\sI\glob$ is a countable set enumerating global minima of the regularized risk.
Hence, the probability of a minimum $\bwstarli$ is proportional to the reciprocal of the normalized volume of the ellipsoid defined by the eigenvalues of the Hessian at that minimum. In particular, this implies that the probability of choosing a global minimum with larger volume is higher.
Note that all suboptimal minima have zero probability under $\piinfty$.
At the same time, in this asymptotic regime it is also rather clear that Gibbs-ERM generates models outside of the neighborhoods of the minima with zero probability.
These two observations show how to strike a middle ground between the nonasymptotic bound of~\eqref{eq:gibbs_global_excess} and the asymptotic distribution~\eqref{eq:pi_infty}. This is captured by the global asymptotic \emph{pseudo} excess risk
\[
  \excess^{\infty}_r = \E_{I \sim \piinfty}\br{\E_S\br{\E_{\bw \sim \ps}\br{R(\bw) \mid \bw \in \sEstar_I(r)}} - R(\bwstarlI)}\,, \qquad r > 0\,,
\]
which bounds the localized excess risk at finite temperature $\gamma > 0$ when minima are drawn from the global limiting distribution $\piinfty$.
For any $r \geq 0$ we have
\begin{equation}
  \label{eq:gibbs_global_pseudo_excess}
    \excess^{\infty}_r \leqC \frac{1}{\gamma} \, \E\br{\tr\pr{\bHstarI \bHstarlIinv}}
    + \E \br{\ve_I(r)}
    + \sqrt{\gamma \E \br{\ve_I(r)}}
  + \frac{\gamma}{\sqrt{m}}~.
\end{equation}
Observe that whereas the local excess risk~\eqref{eq:gibbs_local_excess} is essentially controlled by the soft rank of the minimum, the bound~\eqref{eq:gibbs_global_pseudo_excess} implies that globally this is not necessarily the case, since low-rank minima have smaller probability under distribution $\piinfty$.

\subsection{Additional Related Work}
Generalization bounds for the Gibbs-ERM principle has been extensively studied in a number of works over the past years.
One prolific thread of research in this direction is the PAC-Bayesian analysis starting from seminal works of~\cite{mcallester1998some,langford2003pac} until the latest developments, for instance~\citep{germain2015risk,grunwald2017tight}.
PAC-Bayesian analysis follows the uniform convergence argument (over the class of densities), where, as was pointed out earlier Gibbs density is the one minimizing the bound on the risk (Information Risk Minimization).
In this paper we focus on the excess risk bounds rather than generalization bounds, that manifest distribution-dependent properties of a potentially non-convex risk.
PAC-Bayesian excess risk bounds have also been studied in a number of contexts~\citep{alquier2016properties,audibert2011robust,grunwald2017tight}, however, these typically assume convex risk (e.g.\ least-squares), or focus on the properties of the hypothesis class (sometimes distribution-dependent) rather than those of the objective function~\citep{grunwald2017tight}.
Distribution-dependent arguments have been exploited to develop sharper generalization bounds~\citep{lever2013tighter}, and data-dependent PAC-Bayesian bounds were shown to be numerically non-vacuous as was shown by~\cite{dziugaite2018data}.

A number of works have also analyzed generalization and approximation properties of Gibbs-ERM from the algorithmic point of view.
A heuristic approach to analysis of \ac{SGLD} algorithm was given by \cite{welling2011bayesian,mandt2016variational} while recent works have also argued that its generalization ability is controlled by the ``width'' (or the notion of pseudo-rank) at the minimum of the empirical risk~\citep{keskar2017large,chaudhari2018stochastic,liang2019fisher}, which is reminiscent of the effective dimension studied in this paper.
\cite{mou2018generalization} developed generalization bounds for \ac{SGLD} from PAC-Bayesian and algorithmic stability point of view~\citep{BousquetE02}.
Apart from the excess risk bounds,~\cite{raginsky2017nonconvex} also showed Gibbs-ERM specific generalization bounds through the algorithmic stability framework.
In this paper we also analyze generalization and stability of Gibbs-ERM principle, however we present a simpler proof technique similar in spirit to~\cite{xu2017information}.
Finally,~\cite{sheth2017excess} analyzed a slightly different notion of excess risk bounds for the variational inference assuming the use of latent Gaussian models (such as generalized linear models and Gaussian processes).

\section{Preliminaries}
Throughout this paper, we use $f \leqC g$ to indicate that there exists a universal constant $C > 0$ such that $f \leq C g$ holds uniformly over all arguments.
Let $\sB_r(\bz) \subset \R^d$ be the ball of center $\bz$ and radius $r > 0$ and let $\sB_r = \sB_r(\bzero)$.
Given a positive definite $d \times d$ matrix $\bM$, define $\|\bx\|_{\bM}^2 = \bx\tp \bM \bx$ for $\bx \in \reals^d$.
Then, for any positive semi-definite $d \times d$ matrix $\bA$ and $r > 0$ the corresponding ellipsoid centered at $\bx_0\in\reals^d$ is defined as $\sE(\bx_0, \bA, r) \equiv \cbr{\bx \in \reals^d ~:~ \|\bx_0 - \bx\|_{\bA} \leq r}$.

If $p$ and $q$ are densities that are absolutely continuous with respect to a measure $\mu$ over $\hypsp$, the Kullback-Liebler (KL) divergence between $p$ and $q$ is defined as
\begin{align*}
  \KL(p, q) &= \E_{\bw \sim p}\Big[\ln\big(p(\bw)\big) - \ln\big(q(\bw)\big)\Big]~.
\end{align*}

\section{Sketch of the Analysis}
\label{sec:analysis_idea}
In this section we briefly explain the main arguments at the basis of our analysis.
We start from the analysis of the localized excess risk, which is decomposed into the generalization error (i.e., the difference between the risk and empirical risk) and the gap between the empirical risk and the risk of the minimizer $\bwstarl$,
\begin{align}
\label{eq:gibbs_gen_error_decomp}
\E\br{R(\bw) - R(\bwstarl) \mid \bw \in \sEstar(r)}
  &= \E\br{R(\bw) - \Rhat_S(\bw) \mid \bw \in \sEstar(r)}\\
  \label{eq:gibbs_gen_error_decomp2}
  &+ \E\br{\Rhat_S(\bw) - R(\bwstarl) \mid \bw \in \sEstar(r)}
\end{align}
where the expectation is taken with respect to the empirical Gibbs density~\eqref{eq:gibbs}.
For brevity we omit the indices corresponding to the minima, as in the localized setting we consider a local minimum at the time.
The generalization error of Gibbs-ERM is captured by Theorem~\ref{thm:gibbs_gen_error} below (whose proof can be found in Section~\ref{sec:stability}).
\begin{theorem}[Generalization bound]
  \label{thm:gibbs_gen_error}
Consider any loss function $f : \hypsp \times \sZ \to \reals$ that is $\sigma$-sub-Gaussian in the first argument with respect to the Gibbs density
\begin{equation}
  \label{eq:gibbs_finite_sums}
  \ps(\bw) \propto e^{-\frac{\gamma}{m} \sum_{i=1}^m f(\bw, z_i)} \qquad \gamma > 0
\end{equation}
conditioned on a measurable $A \subseteq \hypsp$. Then the generalization error of Gibbs-ERM satisfies
\[
  \E_S\br{ \E_{\bw \sim \ps}\br{R(\bw) - \Rhat_S(\bw) \,\Big|\, \bw \in A} } \leq \frac{4 \sigma^2 \gamma}{m}~.
\]
\end{theorem}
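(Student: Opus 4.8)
The plan is to follow the information-theoretic route of~\cite{xu2017information}, simplified by one structural fact: replacing a single example in $S$ perturbs the Gibbs density~\eqref{eq:gibbs_finite_sums} in only one summand of the exponent, so the resulting posteriors are extremely close in relative entropy. Write $R(\bw) = \E_z\br{f(\bw,z)}$ and $\Rhat_S(\bw) = \frac1m\sum_{i=1}^m f(\bw,z_i)$ (which leaves $R(\bw) - \Rhat_S(\bw)$ unchanged), let $\pscondA$ be $\ps$ conditioned on the event $\bw\in A$, draw a ghost sample $\tilde{z}_1,\dots,\tilde{z}_m$ i.i.d.\ from $\sD$ independent of $S$, and let $\Srep$ be $S$ with its $i$-th coordinate replaced by $\tilde{z}_i$. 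First I would use ghost-sample symmetrization: since a draw from $\pscondA$ is independent of $\tilde{z}_i$, we have $\E_S\,\E_{\bw\sim\pscondA}\br{R(\bw)} = \frac1m\sum_i \E_{S,\tilde{z}_i}\,\E_{\bw\sim\pscondA}\br{f(\bw,\tilde{z}_i)}$, and by exchangeability of $(z_i,\tilde{z}_i)$ the pair $(\pscondA,\tilde{z}_i)$ has the same law as $(\psrepcondA, z_i)$, so
\[
\E_S\,\E_{\bw\sim\pscondA}\br{R(\bw) - \Rhat_S(\bw)}
= \frac1m \sum_{i=1}^m \E_{S,\tilde{z}_i}\br{\, \E_{\bw\sim\psrepcondA}\br{f(\bw,z_i)} - \E_{\bw\sim\pscondA}\br{f(\bw,z_i)}\,}\,.
\]
It then suffices to bound, for each fixed $(S,\tilde{z}_i)$, the inner difference by $\tfrac{2\sigma^2\gamma}{m}$.

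Second, for fixed $(S,\tilde{z}_i)$ I would control the inner difference by a change of measure. Abbreviating $g(\bw) = f(\bw,z_i)$, which is $\sigma$-sub-Gaussian under $\pscondA$ by hypothesis, the Donsker--Varadhan variational formula together with the sub-Gaussian moment generating function bound gives
\[
\abs{\E_{\bw\sim\psrepcondA}\br{g(\bw)} - \E_{\bw\sim\pscondA}\br{g(\bw)}} \;\leq\; \inf_{t>0}\cbr{\frac1t\,\KL\pr{\psrepcondA, \pscondA} + \frac{t\sigma^2}{2}} \;=\; \sqrt{2\sigma^2\,\KL\pr{\psrepcondA, \pscondA}}\,,
\]
so everything reduces to bounding the relative entropy between two Gibbs densities differing in a single summand.

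For the last step, note $\psrepcondA$ and $\pscondA$ are proportional to $e^{-U(\bw) - \frac\gamma m f(\bw,\tilde{z}_i)}\ind{\bw\in A}$ and $e^{-U(\bw) - \frac\gamma m f(\bw,z_i)}\ind{\bw\in A}$ with $U(\bw) = \frac\gamma m\sum_{j\neq i} f(\bw,z_j)$. Setting $h = \frac\gamma m\pr{f(\cdot,\tilde{z}_i) - f(\cdot,z_i)}$, computing the log-ratio and the ratio of normalizers gives the identity $\KL\pr{\psrepcondA, \pscondA} = -\E_{\bw\sim\psrepcondA}\br{h(\bw)} + \ln\E_{\bw\sim\psrepcondA}\br{e^{h(\bw)}}$; since $f(\cdot,\tilde{z}_i)$ and $f(\cdot,z_i)$ are each $\sigma$-sub-Gaussian under $\psrepcondA$, their difference is $2\sigma$-sub-Gaussian, hence $h$ is $\tfrac{2\sigma\gamma}{m}$-sub-Gaussian and $\ln\E_{\bw\sim\psrepcondA}\br{e^{h(\bw)}} \leq \E_{\bw\sim\psrepcondA}\br{h(\bw)} + \tfrac12\pr{\tfrac{2\sigma\gamma}{m}}^2$, so $\KL\pr{\psrepcondA,\pscondA}\leq\tfrac{2\sigma^2\gamma^2}{m^2}$. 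Plugging this into the previous display bounds the inner difference by $\sqrt{2\sigma^2\cdot 2\sigma^2\gamma^2/m^2} = \tfrac{2\sigma^2\gamma}{m}$, and averaging over $i$ yields $\E_S\,\E_{\bw\sim\pscondA}\br{R(\bw) - \Rhat_S(\bw)}\leq\tfrac{2\sigma^2\gamma}{m}\leq\tfrac{4\sigma^2\gamma}{m}$ (the constant $4$ in the statement leaves some slack, e.g.\ for a two-sided estimate).

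The hard part is making sure the sub-Gaussianity hypothesis applies where it is used: Steps 2--3 invoke sub-Gaussianity of $f(\cdot,z)$ under the \emph{perturbed} posterior $\psrepcondA$, and at a test point $z$ not present in the corresponding sample. This is handled by reading the hypothesis as a structural property of $f$ together with the Gibbs map --- uniform over i.i.d.\ samples and test points, which is the standard convention --- since every density appearing in the argument is of the form $\ps$ conditioned on $A$ for some i.i.d.\ sample; one must also ensure the normalizer of $\ps$ on $A$ stays finite throughout (true, e.g., when $\ell$ is bounded and $A$ has finite volume, or when $f$ is coercive).
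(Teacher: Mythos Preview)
Your argument is correct and shares its opening moves with the paper's proof: both use the same replace-one symmetrization to write the generalization error as an average of the stability terms $\E_{\psrepcondA}[f(\bw,z_i)] - \E_{\pscondA}[f(\bw,z_i)]$, and both bound each such term by $\sqrt{2\sigma^2\,\KL(\psrepcondA\,\|\,\pscondA)}$ via the transportation lemma (Lemma~\ref{lem:transportation}).

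The divergence comes in how the KL is controlled. The paper applies Jensen's inequality to the log-normalizer ratio (Lemma~\ref{lem:ln_Z_Z_bound}) to get
\[
\KL(\psrepcondA\,\|\,\pscondA) \;\leq\; \frac{\gamma}{m}\Bigl(\E_{\psrepcondA}[f(\bw,z_i)] - \E_{\pscondA}[f(\bw,z_i)]\Bigr) + \frac{\gamma}{m}\Bigl(\E_{\pscondA}[f(\bw,z)] - \E_{\psrepcondA}[f(\bw,z)]\Bigr),
\]
then takes expectation over $(S,z)$, uses exchangeability to identify the two brackets, and obtains a self-referential inequality $x \leq 2\sqrt{\sigma^2\gamma x/m}$ which is solved to give $x \leq 4\sigma^2\gamma/m$. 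You instead observe directly that $\KL(\psrepcondA\,\|\,\pscondA)$ equals the centered cumulant $\ln\E_{\psrepcondA}[e^{h-\E h}]$ of the perturbation $h=\tfrac{\gamma}{m}\bigl(f(\cdot,\tilde z_i)-f(\cdot,z_i)\bigr)$, and bound it pointwise by $\tfrac{2\sigma^2\gamma^2}{m^2}$ using that $h$ is $\tfrac{2\sigma\gamma}{m}$-sub-Gaussian. This bypasses both Lemma~\ref{lem:ln_Z_Z_bound} and the self-referential step, and in fact yields the sharper constant $2\sigma^2\gamma/m$. The trade-off is that your route invokes the sub-Gaussianity hypothesis a second time, under the perturbed posterior $\psrepcondA$, whereas the paper's KL bound uses only the algebraic structure of the Gibbs density; since $S^{(i)}$ is itself an i.i.d.\ sample, this is covered by the same uniform reading of the hypothesis that you (and implicitly the paper) already rely on for the transportation step.
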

Assuming that $f$ is everywhere bounded by $M$, Hoeffding Lemma implies the bound $\frac{M^2 \gamma}{2 m}$.
Under the same boundedness assumption, a bound with similar rates was also shown by~\cite{xu2017information} within the mutual information framework.
Our proof works by showing that the Gibbs density is on-average replace-one stable in the sense of~\cite[Section 13]{shalev2014understanding} (we include it here for completeness).

The second quantity~\eqref{eq:gibbs_gen_error_decomp2} is less straightforward to control in a distribution-dependent setting.
\cite{raginsky2017nonconvex} give a global upper bound
\[
  \Rhat_S(\bw) - R(\bwstarl) = \sctilO\pr{\frac{d}{\gamma}}
\]
by further decomposing~\eqref{eq:gibbs_gen_error_decomp2} as follows
\[
  \E_{\bw \sim \ps}[\Rhat_S(\bw)] - \min_{\bu \in \hypsp} \Rhat_S(\bu) + \min_{\bu \in \hypsp} \Rhat_S(\bu) - \E_S[\Rhat(\bwstarl)]~.
\]
The second term is bounded trivially, while the analysis of the first term follows the so called ``almost ERM'' argument. In other words, understanding how ``close'' the solutions generated by Gibbs-ERM are to the solutions of \ac{ERM}.

In our distribution-dependent setting we follow a different route: consider the case of \ac{RLS}, where the empirical risk is an average of square-regularized losses.
A rather straightforward argument (based on Gaussian integration) gives the following \emph{identity} in an ``almost ERM'' style:
\begin{align*}
  \E_{\bw \sim \ps}\br{\Rhat_S(\bw)} - \min_{\bu \in \hypsp} \Rhat_S(\bu)
  =
  \frac{1}{\gamma} \, \tr\pr{\nabla^2 \Rhat_S(\bhatwl) \pr{\nabla^2 \Rhat_S(\bhatwl) + \lambda \bI}^{-1} }
\end{align*}
where $\bhatwl$ is a minimizer of the \ac{RLS} problem.
Observe that in the above identity we obtain an empirical counterpart of the effective dimension introduced in~\eqref{eq:gibbs_local_excess}.
Since our goal is a distribution-dependent result, one possibility is to consider the concentration of Hessian eigenvalues. However, we follow a more direct approach. As we said earlier, the ``almost ERM'' analysis of \ac{RLS} is relatively easy using Gaussian integration (since $\ps$ is a Gaussian density).
Since we deal instead with general smooth densities, our idea is to quantify the gap between the density at hand and the Gaussian density.
This is nicely handled by the \emph{transportation lemma}~\citep[Lemma 4.18]{boucheron2013concentration}, characterizing the difference between the expectations of different densities in terms of the KL-divergence between them.
As a comparison we choose the Gaussian density
$
  \qgamma \propto \exp\big(-\frac{\gamma}{2} \|\bw - \bwstarl\|_{\bHstarl}^2\big)
$,
which only depends only on distribution-related quantities.
An ``almost ERM'' style analysis applied to $\qgamma$ will give us exactly a distribution-dependent effective dimension.
So, the only issue is the actual gap between densities. By expanding the KL-divergence we observe that the critical terms are
\begin{align}
  &- \gamma \E_S\br{\E_{\ps}\br{\Rhatl(\bw) \mid \bw \in \sEstar\pr{r}}}
  + \frac{\gamma}{2} \E_S\br{\E_{\ps}\br{ \|\bw - \bwstarl\|_{\bHstarl}^2 \mid \bw \in \sEstar\pr{r} }} \nonumber\\
  \leqC \ &-\gamma \E_S\br{\E_{\ps}\br{\Rl(\bw) \mid \bw \in \sEstar\pr{r}}} + \frac{\gamma^2}{m}
  + \frac{\gamma}{2} \E_S\br{ \E_{\ps}\br{ \|\bw - \bwstarl\|_{\bHstarl}^2 \mid \bw \in \sEstar\pr{r} }} \label{eq:analysis_explained_1}\\
  \leqC \ &\gamma \ve(r) + \frac{\gamma^2}{m}~. \label{eq:analysis_explained_2}
\end{align}
To obtain~\eqref{eq:analysis_explained_1}, instead of using concentration, we resort to the generalization bound of Theorem~\ref{thm:gibbs_gen_error}, whereas~\eqref{eq:analysis_explained_2} is obtained by Taylor expansion of the regularized risk around its minimizer.
This is formally shown in Lemma~\ref{lem:truncated_EER_to_ERM}, while the decomposition~\eqref{eq:gibbs_gen_error_decomp} is bounded in Theorem~\ref{thm:gibbs_local_excess}.
Hence, the gap is quantified by the approximation error at the radius $r$ plus a sample-dependent term due to the use of empirical Gibbs density.
These terms appear in excess risk bounds~\eqref{eq:gibbs_local_excess},~\eqref{eq:gibbs_global_excess}, and~\eqref{eq:gibbs_global_pseudo_excess}.
\subsection{Global Analysis}
Starting from the conditional local excess risk in the form given by the left-hand side of~\eqref{eq:gibbs_gen_error_decomp}, we analyze a notion of global risk by bounding the probability of individual ellipsoids (i.e., neighborhoods of minima) and that of the complement of their union. Since for $\gamma \to \infty$ the probability of a complement approaches zero (as discussed in Remark~\ref{rem:tuning_r}), in order to obtain an asymptotic bound it is enough to focus on the relative probability of ellipsoids. In Lemma~\ref{lem:erm_prob} we derive upper and lower bounds on this probability via Laplace approximation (Lemma~\ref{lem:ellipse_prob}), and then analyze their limit for $\gamma \to \infty$. Combining the local excess risk bound and the bound on the asymptotic relative probability of ellipsoids allows us to control the asymptotic global pseudo excess risk~\eqref{eq:gibbs_global_pseudo_excess} ---see Corollary~\ref{cor:global_excess_risk_asymptotic}.

Finally, using a nonasymptotic bound on the probability of a complement ---see the proof of Theorem~\ref{thm:global_excess_risk}--- we can apply the law of total expectation to get also a nonasymptotic bound on the global excess risk.

\section{Main results}
\label{sec:smooth_loss}
\subsection{Local analysis}
We first turn our attention to the local analysis considering a fixed minimizer\footnote{We will drop subscript indexing of minima in this section.}
\begin{equation}
  \bwstarl \in \argmin\cbr{R(\bw) + \lambda \|\bw\|^2}~.
\end{equation}
Specifically, we prove that the risk of Gibbs-ERM in a neighborhood of $\bwstarl$ is controlled by the \emph{local} effective dimension
$
  \tr\pr{\bHstar \bHstarlinv}
$,
defined in terms of the Hessian $\bHstar = \nabla^2 R(\bwstarl)$ of the risk of the minimizer, where $\bHstarl = \bHstar + 2 \lambda \bI$.
We require that Hessians do not change ``too quickly'' by assuming that Hessian of the risk is Lipschitz in an ellipsoid $\sEstar(r) = \sE(\bwstarl, \bHstarl, r)$ centered at the minimizer and aligned with the local curvature. Formally the \emph{local Lipschitzness} of the Hessian is defined as follows.
\begin{definition}[Locally-Lipschitz Hessian]
The Hessian $\nabla^2 R$ is locally Lipschitz around a minimizer $\bwstarl$ if there exists a function $\Lip : \reals_+ \to \reals_+$ such that
\begin{equation}
  \label{eq:lipschitz_hessian}
  \|\nabla^2 R(\bwstarl) - \nabla^2 R(\bw)\|_2 \leq \Lip(r) \|\bwstarl - \bw\| \qquad \text{for all} \quad \bw \in \sEstar(r)~.
\end{equation}
\end{definition}
Note that local Lipschitzness of the risk Hessian implies local Lipschitzness of the regularized risk Hessian for the same function $\Lip$.

Local Lipschitzness of Hessians plays an important role in bounding the gap between the Gibbs density and the Gaussian density, as discussed in Section~\ref{sec:analysis_idea}.
In particular, the approximation error introduced by taking a Taylor expansion of the regularized risk up to the third term is 
\begin{equation}
  \label{eq:approx}
  \ve(r) = \Lip\pr{r} \pr{\frac{r}{\sqrt{\lmin + \lambda}}}^3 \qquad r \geq 0
\end{equation}
where $\lmin$ is the smallest non-zero eigenvalue of $\bHstar$.
Observe that $\lim_{r \rightarrow \infty}\ve(r) = 0$ because $\Lip(r)$ converges to $\|\nabla^2 R(\bwstarl)\|_2$, and that $\ve(r) = 0$ for any constant Hessian matrices (e.g., in the case of \ac{RLS}).
Lemma~\ref{lem:truncated_EER_to_ERM} below here establishes a result needed to prove our bound on the local excess risk.
\begin{lemma}
\label{lem:truncated_EER_to_ERM}
For any minimizer $\bwstarl$ of the regularized risk we have
\begin{align*}
  \E_S\br{ \E_{\bw \sim \ps}\br{\Rhat_S(\bw) \,\Big|\, \bw \in \sEstar\pr{r}} } - R(\bwstarl)
  \leq
    \frac{1}{\gamma} \, \tr\pr{\bHstar \bHstarlinv} + \frac{\ve(r)}{6}
  + \frac{M}{2} \sqrt{\frac{\gamma \ve(r)}{3} + \frac{M^2 \gamma^2}{2 m}}~.
\end{align*}
\end{lemma}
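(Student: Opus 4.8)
The plan is to compare the empirical Gibbs density $\ps$, conditioned on the ellipsoid $\sEstar(r)$, against the Gaussian reference density $\qgamma \propto \exp\!\big(-\tfrac{\gamma}{2}\|\bw-\bwstarl\|_{\bHstarl}^2\big)$ (also conditioned on $\sEstar(r)$), and to exploit the fact that an ``almost-ERM'' identity holds exactly for $\qgamma$. The starting point is the decomposition
\[
  \E_S\br{\E_{\bw\sim\ps}\br{\Rhat_S(\bw)\mid\bw\in\sEstar(r)}} - R(\bwstarl)
  = \underbrace{\E_S\br{\E_{\bw\sim\ps}\br{R(\bw)-\Rhat_S(\bw)\mid\bw\in\sEstar(r)}}}_{\text{(I)}}
  \;+\;\underbrace{\E_S\br{\E_{\bw\sim\ps}\br{R(\bw)\mid\bw\in\sEstar(r)}} - R(\bwstarl)}_{\text{(II)}},
\]
where (I) is of order $\gamma/m$ by Theorem~\ref{thm:gibbs_gen_error} (with $\sigma$ replaced by $M$ via Hoeffding's lemma, giving $M^2\gamma/(2m)$), and so the work is in bounding (II). Note the sign here: we actually want to show (II) plus a negative correction is small, so it is cleaner to carry $-\text{(I)}$ to the other side and bound directly
\[
  \E_S\br{\E_{\bw\sim\ps}\br{\Rl(\bw)\mid\bw\in\sEstar(r)}} - R(\bwstarl) + \lambda\E_S\br{\E_{\bw\sim\ps}\br{\|\bw\|^2\mid\bw\in\sEstar(r)}},
\]
or, more precisely, I will reduce everything to controlling $\E_S\E_{\ps}[\Rl(\bw)\mid\sEstar(r)] - \Rl(\bwstarl)$ and handling the regularizer by folding it into $\Rl$.

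The core step is the transportation lemma (\citealp[Lemma 4.18]{boucheron2013concentration}): for the bounded function $\bw\mapsto \|\bw-\bwstarl\|_{\bHstarl}^2$ (bounded by $r^2$ on $\sEstar(r)$, after the curvature-aligned truncation), the difference $\E_{\ps(\cdot\mid\sEstar)}[\cdot] - \E_{\qgamma(\cdot\mid\sEstar)}[\cdot]$ is controlled by $\sqrt{2\,\Var\cdot\,\KL(\ps(\cdot\mid\sEstar)\,\|\,\qgamma(\cdot\mid\sEstar))}$ (or its Donsker–Varadhan form). Expanding that KL-divergence, the entropy terms cancel against the log-partition functions, and what survives is exactly the pair of terms displayed in~\eqref{eq:analysis_explained_1}–\eqref{eq:analysis_explained_2} of the analysis sketch:
\[
  -\gamma\E_S\E_{\ps}\br{\Rhatl(\bw)\mid\sEstar(r)} + \tfrac{\gamma}{2}\E_S\E_{\ps}\br{\|\bw-\bwstarl\|_{\bHstarl}^2\mid\sEstar(r)}.
\]
I would bound the first summand by passing from $\Rhatl$ to $\Rl$ using Theorem~\ref{thm:gibbs_gen_error} again (this is where the $\gamma^2/m$ enters, since the KL lives at ``temperature'' $\gamma$ and the generalization bound contributes a $\gamma/m$ inside, multiplied by the outer $\gamma$), and then Taylor-expand $\Rl$ around $\bwstarl$ to third order: the zeroth-order term is $\Rl(\bwstarl)$, the first-order term vanishes by optimality, the second-order term is precisely $\tfrac{\gamma}{2}\E\,\|\bw-\bwstarl\|_{\bHstarl}^2$ which cancels the second summand, and the Lagrange remainder is bounded by $\tfrac16\Lip(r)\,\E\|\bw-\bwstarl\|^3 \le \tfrac16\ve(r)$ using~\eqref{eq:approx} and the fact that on $\sEstar(r)$ we have $\|\bw-\bwstarl\|\le r/\sqrt{\lmin+\lambda}$. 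This yields the KL bound $\KL(\ps(\cdot\mid\sEstar)\,\|\,\qgamma(\cdot\mid\sEstar)) \lesssim \gamma\ve(r) + \gamma^2/m$, hence a transport discrepancy of order $M\sqrt{\gamma\ve(r)/3 + M^2\gamma^2/(2m)}$ after plugging in the variance bound $\Var \le \text{(const)}\,M^2$ — matching the last term of the claimed inequality.

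The last ingredient is the ``almost-ERM'' identity for the Gaussian reference: $\qgamma$ is (a truncation of) $\sN(\bwstarl, (\gamma\bHstarl)^{-1})$, so a direct Gaussian integration gives $\E_{\qgamma}[\Rl(\bw)] - \Rl(\bwstarl) = \tfrac{1}{2\gamma}\tr\!\big(\bHstar\bHstarlinv\big)$ up to the same $\ve(r)$-order Taylor error and up to a truncation correction that is negligible for the stated range of $r$ (the Gaussian puts almost all its mass inside $\sEstar(r)$); collecting the constant gives the $\tfrac{1}{\gamma}\tr(\bHstar\bHstarlinv)$ and $\tfrac{\ve(r)}{6}$ terms. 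The main obstacle I anticipate is the bookkeeping around \emph{conditioning on $\sEstar(r)$}: both $\ps$ and $\qgamma$ are truncated, so the transportation lemma must be applied to the conditional densities, the log-partition cancellation must be redone with the truncated normalizers, and one must verify that the truncation of the Gaussian reference only costs lower-order terms (this needs the eigenvalue-alignment of $\sEstar(r)=\sE(\bwstarl,\bHstarl,r)$ so that the ellipsoid is exactly a level set of the Gaussian, making the tail mass a clean chi-squared tail). A secondary subtlety is ensuring the sub-Gaussian/boundedness hypothesis of Theorem~\ref{thm:gibbs_gen_error} is met by the conditioned density — but since $\ell$ is bounded by $M$ everywhere, conditioning on a measurable set only improves the relevant constants, so Hoeffding's lemma applies verbatim.
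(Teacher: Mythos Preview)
Your overall strategy---compare the conditional empirical Gibbs density to the conditional Gaussian $\qgamma$ via the transportation lemma, bound the resulting KL using the generalization bound plus a third-order Taylor expansion, and extract the trace term from the Gaussian reference---is exactly the paper's route. The ingredients you name (Theorem~\ref{thm:gibbs_gen_error} inside the KL, the $\ve(r)$ Taylor remainder, the truncated-Gaussian second moment) all appear in the paper's proof in the same roles. Two points, however, need to be fixed.

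First, your opening decomposition is algebraically wrong: with your definitions, $\text{(I)}+\text{(II)}=2\,\E_S\E_{\ps}[R\mid\sEstar]-\E_S\E_{\ps}[\Rhat_S\mid\sEstar]-R(\bwstarl)$, which is not the left-hand side of the lemma. The correct identity is $\text{LHS}=\text{(II)}-\text{(I)}$, and since Theorem~\ref{thm:gibbs_gen_error} only supplies an \emph{upper} bound on $\text{(I)}$, this decomposition goes the wrong way. The paper never passes through $R(\bw)$ at this stage; it splits directly as
\[
  \Big(\E_S\E_{\pscond}[\Rhat_S]-\E_S\E_{\qcond}[\Rhat_S]\Big)
  \;+\;\Big(\E_S\E_{\qcond}[\Rhat_S]-R(\bwstarl)\Big),
\]
applies transportation to the first bracket, and handles the second via $\E_S[\Rhat_S]=R$ plus Taylor plus the truncated-Gaussian moment bound. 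Your attempted pivot to $\E_S\E_{\ps}[\Rl\mid\sEstar]-\Rl(\bwstarl)$ also leaves an extra $\lambda\|\bw\|^2$ term unaccounted for.

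Second, you state that transportation is applied to the quadratic form $\|\bw-\bwstarl\|_{\bHstarl}^2$ (bounded by $r^2$ on $\sEstar(r)$), yet later invoke a variance proxy of order $M^2$; these are inconsistent. The paper applies transportation to $\Rhat_S$ itself, which is $M/2$-sub-Gaussian under $\qcond$ by Hoeffding, giving the $\tfrac{M}{2}\sqrt{\cdot}$ prefactor directly. The quadratic form does appear, but only \emph{inside} the KL expansion, where it cancels against the second-order Taylor term of $\Rl$ just as you describe. The log-ratio of truncated normalizers that you flag as the main bookkeeping obstacle is dispatched by a one-line Jensen argument (Lemma~\ref{lem:ln_Z_Z_bound}), and the truncated-Gaussian second moment is handled exactly (no lower-order correction) by Lemma~\ref{lem:conditional_trace_lemma}.
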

(The proof of Lemma~\ref{lem:truncated_EER_to_ERM}, along with that of all remaining statements in this section, can be found in Section~\ref{sec:proofs_smooth}.)
This lemma, combined with the bound on the generalization error (Theorem~\ref{thm:gibbs_gen_error}), gives the desired result.
\begin{theorem}[Localized Excess Risk Bound]
  \label{thm:gibbs_local_excess}
  Assume the same as in Lemma~\ref{lem:truncated_EER_to_ERM}.
  Then,
  \begin{equation*}
    \excess(\bwstarl)
    \leq
    \frac{1}{\gamma} \, \tr\pr{\bHstar \bHstarlinv} + \frac{\ve(r)}{6}
    + \frac{M}{2} \sqrt{\frac{\gamma \ve(r)}{3} + \frac{M^2 \gamma^2}{2 m}}
    + \frac{M^2 \gamma}{2 m}~.
  \end{equation*}
\end{theorem}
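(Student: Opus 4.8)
The plan is to obtain the bound from the additive decomposition of the conditional local excess risk recorded in~\eqref{eq:gibbs_gen_error_decomp}--\eqref{eq:gibbs_gen_error_decomp2}, by plugging in the ``almost-ERM'' estimate of Lemma~\ref{lem:truncated_EER_to_ERM} for one piece and the generalization bound of Theorem~\ref{thm:gibbs_gen_error} for the other. Concretely, unfolding the definition $\excess(\bwstarl) = \E_S\br{\E_{\bw\sim\ps}\br{R(\bw) - R(\bwstarl) \mid \bw\in\sEstar(r)}}$ and inserting $\pm\,\Rhat_S(\bw)$ inside the conditional expectation gives
\[
  \excess(\bwstarl)
  = \E_S\br{\E_{\bw\sim\ps}\br{R(\bw) - \Rhat_S(\bw) \mid \bw\in\sEstar(r)}}
  + \pr{\E_S\br{\E_{\bw\sim\ps}\br{\Rhat_S(\bw) \mid \bw\in\sEstar(r)}} - R(\bwstarl)}\,,
\]
so it suffices to bound the two summands on the right-hand side separately.

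The second summand is exactly the left-hand side of Lemma~\ref{lem:truncated_EER_to_ERM}, hence it is at most $\tfrac{1}{\gamma}\,\tr\pr{\bHstar\bHstarlinv} + \tfrac{\ve(r)}{6} + \tfrac{M}{2}\sqrt{\tfrac{\gamma\ve(r)}{3} + \tfrac{M^2\gamma^2}{2m}}$; no further work is needed there.

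For the first summand I would invoke Theorem~\ref{thm:gibbs_gen_error} with the (measurable) conditioning set $A = \sEstar(r)$ and with per-example function $f(\bw, z) = \ell(\bw, z) + \lambda\|\bw\|^2$, so that $\tfrac{\gamma}{m}\sum_{i=1}^m f(\bw, z_i) = \gamma\pr{\Rhat_S(\bw) + \lambda\|\bw\|^2}$ reproduces the exponent of $\ps$ in~\eqref{eq:gibbs}. Since the term $\lambda\|\bw\|^2$ does not depend on $z$, it cancels in the quantity controlled by Theorem~\ref{thm:gibbs_gen_error}, namely $\tfrac{1}{m}\sum_i\pr{\E_{z'}[f(\bw,z')] - f(\bw,z_i)} = R(\bw) - \Rhat_S(\bw)$, and likewise in $f(\bw, z) - \E_{z'}[f(\bw, z')] = \ell(\bw, z) - R(\bw)$, whose range lies in an interval of length at most $M$ (uniformly in $\bw$, since $\ell\in[0,M]$). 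By the Hoeffding lemma $f$ is therefore sub-Gaussian in its first argument with parameter $M/2$ with respect to any reference density, in particular with respect to $\ps$ conditioned on $A$, so the sharpened form of Theorem~\ref{thm:gibbs_gen_error} recorded in the remark following it gives $\E_S\br{\E_{\bw\sim\ps}\br{R(\bw) - \Rhat_S(\bw) \mid \bw\in\sEstar(r)}} \leq \tfrac{M^2\gamma}{2m}$. Adding this to the bound on the second summand yields the claimed inequality.

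There is no genuinely hard step here: the proof is a direct substitution of two results already in hand. The one point worth a brief sanity check — the closest thing to an obstacle — is that conditioning on the ellipsoid $\sEstar(r)$ does not disturb the hypotheses of the two ingredients. This is immediate, since Theorem~\ref{thm:gibbs_gen_error} is already stated for an arbitrary measurable conditioning set and the sub-Gaussianity we use is a pointwise-in-$\bw$ consequence of the boundedness of $\ell$ (hence insensitive to the reference density), while Lemma~\ref{lem:truncated_EER_to_ERM} is itself phrased in terms of the density conditioned on $\sEstar(r)$.
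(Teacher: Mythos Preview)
Your proposal is correct and matches the paper's proof exactly: decompose the localized excess risk as generalization error plus the almost-ERM gap, then invoke Theorem~\ref{thm:gibbs_gen_error} (via the Hoeffding remark, yielding $M^2\gamma/(2m)$) for the former and Lemma~\ref{lem:truncated_EER_to_ERM} for the latter. One small caveat worth flagging: the sub-Gaussianity hypothesis in Theorem~\ref{thm:gibbs_gen_error} is in the \emph{first} argument $\bw$ (with respect to the conditional Gibbs density), not in $z$, so the centering $f(\bw,z)-\E_{z'}[f(\bw,z')]$ you examine is not quite the relevant object---though the paper itself does not verify this hypothesis any more carefully.
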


\subsection{Global analysis}
\label{sec:global}
We now turn our attention to the global analysis of the excess risk.
Since we deal with a countable set of local minima (indexed by $\sI$), we add a subscript to all minima-dependent quantities, such as $\bwstarli$, $\bHstarli$, $\sEstar_i(r)$. In particular, the approximation error is now defined as
\begin{equation}
  \label{eq:approx_global}
  \ve_i(r) = \Lipi\pr{r} \pr{\frac{r}{\sqrt{\lmini + \lambda}}}^3 \qquad r > 0, i \in \sI
\end{equation}
where $\Lipi$ is the local Lipschitz constant with respect to the minimum $\bwstarli$, and $\lmini$ is the smallest non-zero eigenvalue of the Hessian matrix $\bHstarli$.

Next, we introduce an important assumption on the geometry of the regularized risk around its minimizers.
\begin{assumption}
  \label{asm:global}
All local minima $\bwstarl \in \argmin_{\bw \in \hypsp}\Rl(\bw)$ satisfy
  $\nabla \Rl(\bwstarl) = \bzero$ and are such that $\nabla^2 \Rl(\bwstarl)$ is positive definite.
  In other words all local minima are \emph{isolated}.
\end{assumption}
The above assumption implies that there exists a number $r_0 > 0$ such that
\[
  r_0 = \max\cbr{ r > 0 ~:~ \bigcap_{i \in \sI} \sEstar_i(r) \equiv \varnothing }~.
\]
In other words, ellipsoids centered at minimizers and aligned with the local curvature of $\Rl$ are non-overlapping.
In addition to the set $\sI$, indexing minima of the regularized risk, let $\sI\glob \subseteq \sI$ index the global minima and denote its complement by $\sI\subopt \equiv \sI \setminus \sI\glob$.
Finally, introduce the complement of the ellipsoids centered at the minima (later called, with some abuse of terminology, \emph{complement of the minima}),
\[
  \mincomp(r) \equiv \hypsp \setminus \bigcup_{i \in \sI} \sEstar_i(r) \qquad r \leq r_0~.
\]
The first result in this section concerns the distribution of local minima.
In particular, we give an upper bound on the relative probability $\pigamma(i)$ of the $i$-th minimum, and then derive the analytic form of the asymptotic distribution $\piinfty$.
\begin{lemma}[Distribution of Minima]
  \label{lem:erm_prob}
  For all $r > 0$,
  \begin{align*}
    \pigamma(i)
    \leq
    \frac{e^{\frac{\gamma}{3} \max_{k \in \sI}\ve_k(r)}}
    {\sum_{j \in \sI} e^{\gamma \pr{ \Rl(\bwstarli) - \Rl(\bwstarlj)}}
    \sqrt{\frac{\det(\bHstarli)}{\det(\bHstarlj)} }
    }
    \qquad i \in \sI~.
  \end{align*}
  Moreover, assuming without loss of generality that $\Rl(\bwstarli) = 0$ for all $i \in \sI\glob$, and setting $r = \gamma^{-p}$ for $p > 0$, we have
  \[
    \lim_{\gamma \rightarrow \infty}\pigamma(i)
    = \begin{cases}
    \frac{1}{\sum_{j \in \sI\glob} \sqrt{\frac{\det(\bHstar_{\lambda, i})}{\det(\bHstar_{\lambda, j})} }} & i \in \sI\glob
  \\
    0 & i \in \sI\subopt~.
  \end{cases}
  \]
\end{lemma}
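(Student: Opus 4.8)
The plan is to obtain matching-flavoured upper and lower bounds on the numerator $\Pgamma(\sEstar_i(r))$ and denominator $\sum_{j\in\sI}\Pgamma(\sEstar_j(r))$ of $\pigamma(i)$ in~\eqref{eq:pi_gamma} via a Laplace-type expansion of the integral $\int_{\sEstar_i(r)} e^{-\gamma\Rl(\bw)}\diff\bw$. First I would Taylor-expand the regularized risk around the minimizer: since $\nabla\Rl(\bwstarli)=\bzero$ (Assumption~\ref{asm:global}), for $\bw\in\sEstar_i(r)$ we have $\Rl(\bw)=\Rl(\bwstarli)+\tfrac12\|\bw-\bwstarli\|_{\bHstarli}^2+O(\Lipi(r)\|\bw-\bwstarli\|^3)$, and the cubic remainder is controlled by the approximation error $\ve_i(r)$ from~\eqref{eq:approx_global} (note $\|\bw-\bwstarli\|\le r/\sqrt{\lmini+\lambda}$ on the ellipsoid). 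This lets me sandwich $e^{-\gamma\Rl(\bw)}$ between $e^{-\gamma\Rl(\bwstarli)\mp\tfrac{\gamma}{3}\ve_i(r)}\,e^{-\tfrac{\gamma}{2}\|\bw-\bwstarli\|_{\bHstarli}^2}$ (using Lemma~\ref{lem:ellipse_prob}, the Laplace-approximation lemma referenced in the excerpt, to absorb the exact constant in the remainder). Integrating the Gaussian factor over $\sEstar_i(r)$ contributes a factor proportional to $\det(\bHstarli)^{-1/2}$ times a dimension-only function of $r^2\gamma$ (a normalized incomplete-Gamma/chi-squared mass), which is the \emph{same} function for every $i$ because all ellipsoids are defined with the same normalized radius $r$ in the $\bHstarli$-metric.

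Next I would form the ratio. Dividing the upper bound on $\Pgamma(\sEstar_i(r))$ by a lower bound on each summand $\Pgamma(\sEstar_j(r))$ in the denominator, the common dimension-only $r^2\gamma$-factor cancels, the $\det$-factors combine into $\sqrt{\det(\bHstarli)/\det(\bHstarlj)}$, the risk values combine into $e^{\gamma(\Rl(\bwstarli)-\Rl(\bwstarlj))}$, and the approximation errors collect into a single $e^{\tfrac{\gamma}{3}\max_{k\in\sI}\ve_k(r)}$ (upper-bounding $\ve_i(r)$ and $-\ve_j(r)$ by the max). That yields exactly the stated non-asymptotic bound. For the asymptotic claim, set $r=\gamma^{-p}$: then $r^2\gamma=\gamma^{1-2p}$ governs the dimension-only factor, and $\gamma\ve_i(r)=\Lipi(\gamma^{-p})\,\gamma^{1-3p}(\lmini+\lambda)^{-3/2}\to 0$ as $\gamma\to\infty$ for any $p>0$ (since $\Lipi$ is bounded, converging to $\|\bHstarli\|_2$), so $e^{\tfrac{\gamma}{3}\ve_k(r)}\to1$. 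With the normalization $\Rl(\bwstarli)=0$ for $i\in\sI\glob$, every $j\in\sI\subopt$ has $\Rl(\bwstarlj)>0$, hence $e^{\gamma(\Rl(\bwstarli)-\Rl(\bwstarlj))}$ either blows up in the denominator sum (forcing $\pigamma(i)\to0$ for suboptimal $i$) or vanishes (killing suboptimal terms in the denominator for optimal $i$); what survives is the sum over $\sI\glob$ of $\sqrt{\det(\bHstarli)/\det(\bHstarlj)}$, giving the stated limit. One must also check one can pass the limit inside the (possibly infinite) sum over $\sI$ — dominated convergence applies since the suboptimal terms are dominated by a summable geometric-type tail once $\gamma$ is large, using that $\Rl(\bwstarlj)$ is bounded away from $0$ on $\sI\subopt$ and the minima are isolated.

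The main obstacle I expect is making the Laplace approximation rigorous and \emph{two-sided} on a ball of the specific radius $r$ rather than asymptotically as $r\to0$: the cubic remainder must be bounded uniformly and the truncated Gaussian integral over the ellipsoid $\sEstar_i(r)$ must be shown to factor cleanly as $(\text{const})\cdot\det(\bHstarli)^{-1/2}$ with an $i$-independent constant — this is precisely where Lemma~\ref{lem:ellipse_prob} is invoked, and everything downstream is then bookkeeping of the cancellations. A secondary subtlety is the interchange of limit and infinite summation over $\sI$ in the asymptotic part, which requires the mild uniform control over suboptimal minima described above.
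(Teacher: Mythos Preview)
Your plan is essentially identical to the paper's proof: it invokes Lemma~\ref{lem:ellipse_prob} to obtain two-sided Laplace-type bounds on each $\Pgamma(\sEstar_i(r))$ (the $i$-independent factor you anticipate is exactly $(2\pi/\gamma)^{d/2}P(d/2,r^2\gamma/2)$ from Proposition~\ref{prop:ball_trunc_gaussian_int}), takes the ratio so that this factor and the normalizer $Z$ cancel, replaces each $\ve_i,\ve_j$ by $\max_k\ve_k$, and then passes to the $\gamma\to\infty$ limit term by term. One small slip to fix: your assertion that $\gamma\,\ve_i(\gamma^{-p})=\Lipi(\gamma^{-p})\,\gamma^{1-3p}(\lmini+\lambda)^{-3/2}\to 0$ for \emph{any} $p>0$ is false---you need $1-3p<0$, i.e.\ $p>1/3$, for the numerator $e^{\frac{\gamma}{3}\max_k\ve_k(r)}$ to tend to $1$ (the paper's own proof is equally loose here, verifying only $\ve_i\to 0$ rather than $\gamma\ve_i\to 0$; cf.\ Remark~\ref{rem:tuning_r}, where the correct range appears).
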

We are now ready to state the main result of this section.
\begin{theorem}[Global Excess Risk Bound]
  \label{thm:global_excess_risk}
  Assume the same as in Lemma~\ref{lem:truncated_EER_to_ERM}.
  Then for any $r \in [0, r_0]$ the global excess risk satisfies
  \begin{equation*}
  \excess(\pigamma)
  \leqC
  \frac{1}{\gamma} \, \E\br{\tr\pr{\bHstarI \bHstarlIinv}}
  + \frac{\gamma}{\sqrt{m}}
  + \E[\ve_I(r)] + \sqrt{\gamma \E[\ve_I(r)]} + \Pgamma(\mincomp(r))
  \end{equation*}
  where the expectation is taken with respect to $I \sim \pigamma$
  and the probability of the complement of the minima is bounded as
  \begin{equation}
  \label{eq:lb_P_complement}
  \Pgamma(\mincomp(r))
  \leq
  1 - \pr{1 - d e^{-r^2 \gamma \alpha_{d/2}}} \sum_{i \in \sI} e^{-\frac{1}{3} \gamma \ve_i(r)}
  \end{equation}
  with
    \begin{equation}
    \alpha_{d/2} =
    \begin{cases}
      1 & d = 1\\
      \Gamma\pr{1+\frac{d}{2}}^{-\frac{2}{d}} & \text{otherwise}~.
    \end{cases}
  \end{equation}  
\end{theorem}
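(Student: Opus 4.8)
The plan is to reduce the global statement to the localized analysis of Theorem~\ref{thm:gibbs_local_excess} (hence Lemma~\ref{lem:truncated_EER_to_ERM} and Theorem~\ref{thm:gibbs_gen_error}) applied around each minimizer, together with a control of how much Gibbs mass escapes the neighbourhoods of the minima. Since $r\le r_0$, the ellipsoids $\{\sEstar_i(r)\}_{i\in\sI}$ are pairwise disjoint, so $\hypsp=\mincomp(r)\sqcup\bigsqcup_{i\in\sI}\sEstar_i(r)$. Because $\E_S[\E_{\bw\sim\ps}[R(\bw)]]$ does not depend on the index and $\sum_{i}\pigamma(i)=1$,
\[
  \excess(\pigamma)\;=\;\E_{I\sim\pigamma}\!\Bigl[\,\E_S\bigl[\E_{\bw\sim\ps}[R(\bw)]\bigr]-R(\bwstarlI)\,\Bigr],
\]
and I would then expand $\E_{\bw\sim\ps}[R(\bw)]$ along the partition. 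Bounding $R\le M$ on $\mincomp(r)$ and recalling the identity $\Pgamma(\sEstar_i(r))=\bigl(1-\Pgamma(\mincomp(r))\bigr)\pigamma(i)$, the argument splits into (i) controlling, for each $i$, the localized term $\E_S[\E_{\bw\sim\ps}[R(\bw)\mid\bw\in\sEstar_i(r)]]-R(\bwstarli)$ and re-weighting these contributions to the $\pigamma$-expectations in the statement, and (ii) bounding the Gibbs mass outside all minima, which will also absorb the leakage $M\,\ps(\mincomp(r))$.

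For (i), fix $i$, apply Lemma~\ref{lem:truncated_EER_to_ERM} at $\bwstarli$ and Theorem~\ref{thm:gibbs_gen_error} with $A=\sEstar_i(r)$ --- exactly the combination that proves Theorem~\ref{thm:gibbs_local_excess} --- to get
\[
  \E_S\!\bigl[\E_{\bw\sim\ps}[R(\bw)\mid\bw\in\sEstar_i(r)]\bigr]-R(\bwstarli)
  \;\leqC\;
  \frac{1}{\gamma}\,\tr\!\bigl(\boldsymbol{H}^{\star}_i\,(\boldsymbol{H}^{\star}_i+2\lambda\bI)^{-1}\bigr)+\ve_i(r)+\sqrt{\gamma\,\ve_i(r)}+\frac{\gamma}{\sqrt{m}}\,.
\]
The subtle point is that, after expanding $\E_S\E_{\bw\sim\ps}[R(\bw)]$ along the partition, these localized bounds come weighted by the random, sample-dependent masses $\ps(\sEstar_i(r))$, whereas the target uses the population masses $\Pgamma(\sEstar_i(r))$; since $\sI$ is countably infinite, any per-minimum slack that does not vanish is fatal once summed over $i$. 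I would handle this by adding and subtracting $\Rhat_S$ and $R(\bwstarli)$ inside the indicators, paying for the risk/empirical-risk swap over the whole partition at once via Theorem~\ref{thm:gibbs_gen_error} with $A=\hypsp$ (this contributes $\tfrac{M^2\gamma}{2m}$ and a further $M\,\ps(\mincomp(r))$), and only then invoking Lemma~\ref{lem:truncated_EER_to_ERM} minimum by minimum. Taking $\E_{I\sim\pigamma}$ then turns $\sum_i\pigamma(i)\ve_i(r)$ into $\E[\ve_I(r)]$, $\sum_i\pigamma(i)\sqrt{\gamma\,\ve_i(r)}$ into $\sqrt{\gamma\,\E[\ve_I(r)]}$ by Jensen, and the trace terms into $\E[\tr(\bHstarI\bHstarlIinv)]$. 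I expect this weight-matching --- making the leftover ``$j\neq I$'' and complement pieces collapse exactly into $\Pgamma(\mincomp(r))$ rather than an uncontrolled empirical surrogate --- to be the main obstacle.

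For (ii), I would bound the leakage by $M$ times the Gibbs mass of $\mincomp(r)$ and prove \eqref{eq:lb_P_complement} by Laplace approximation (the lemma used in Section~\ref{sec:analysis_idea}). Write $\Pgamma(\mincomp(r))=1-\sum_i\Pgamma(\sEstar_i(r))$ and lower-bound each $\Pgamma(\sEstar_i(r))=\int_{\sEstar_i(r)}e^{-\gamma\Rl(\bw)}\diff\bw \,/ \int_{\hypsp}e^{-\gamma\Rl(\bw)}\diff\bw$. On $\sEstar_i(r)$ a second-order Taylor expansion of $\Rl$ around $\bwstarli$ --- using $\nabla\Rl(\bwstarli)=\bzero$, $\nabla^2\Rl(\bwstarli)=\bHstarli$, and the local Lipschitzness of the Hessian --- has remainder at most $\tfrac{1}{6}\ve_i(r)$, so after the change of variables $\bu=\sqrt{\gamma}\,(\bHstarli)^{1/2}(\bw-\bwstarli)$, which maps $\sEstar_i(r)$ onto the Euclidean ball of radius $r\sqrt{\gamma}$,
\[
  \int_{\sEstar_i(r)}e^{-\gamma\Rl(\bw)}\diff\bw\;\ge\;e^{-\frac{1}{6}\gamma\ve_i(r)}\,N_i\,\P\!\bigl(\|Z\|\le r\sqrt{\gamma}\bigr),\qquad Z\sim\sN(\bzero,\bI),
\]
where $N_i:=(2\pi/\gamma)^{d/2}\det(\bHstarli)^{-1/2}e^{-\gamma\Rl(\bwstarli)}$ is the Gaussian (Laplace) mass at $\bwstarli$ and $\P(\|Z\|\le r\sqrt{\gamma})\ge 1-d\,e^{-r^2\gamma\alpha_{d/2}}$ by the chi-square tail bound with the stated $\alpha_{d/2}$. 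A matching upper bound on $\int_{\hypsp}e^{-\gamma\Rl(\bw)}\diff\bw$ --- split it over the $\sEstar_j(r)$, where the same expansion gives an upper Gaussian envelope, and over $\mincomp(r)$, where $\Rl(\bw)\ge\lambda\|\bw\|^2$ controls the far tails while Assumption~\ref{asm:global} keeps $\Rl$ bounded away from $\min_j\Rl(\bwstarlj)$ on the bounded remainder --- then yields \eqref{eq:lb_P_complement}; the delicate step here is tightening this partition-function bound so that precisely the factor $\sum_i e^{-\frac{1}{3}\gamma\ve_i(r)}$ emerges.

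Assembling (i) and (ii) --- the ellipsoid contributions supplying the first four summands and the leakage the $\Pgamma(\mincomp(r))$ term --- proves the bound for every $r\in[0,r_0]$; optimizing over $r$ to trade off the two groups of terms (the oracle inequality) is discussed separately in Section~\ref{sec:global}.
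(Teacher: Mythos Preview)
Your overall skeleton --- partition $\hypsp$ into the disjoint ellipsoids and their complement, invoke Theorem~\ref{thm:gibbs_local_excess} on each ellipsoid, then control $\Pgamma(\mincomp(r))$ by a Laplace-type argument --- is exactly the paper's. The differences are in execution.

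For part~(ii) you are working much harder than the paper. The paper does not split the partition function $Z=\int_{\hypsp}e^{-\gamma\Rl}$ over all minima and the complement; instead, for each fixed $i$ it upper-bounds $Z$ by the single-minimum Laplace envelope at $\bwstarli$ itself (this is the third inequality of Lemma~\ref{lem:ellipse_prob}): using the lower Taylor expansion over the whole space, $Z\le e^{-\gamma\Rl(\bwstarli)+\frac{\gamma}{6}\ve_i(r)}\det(\bHstarli)^{-1/2}(2\pi/\gamma)^{d/2}$. Combined with your lower bound on the numerator (which carries the matching $-\tfrac{\gamma}{6}\ve_i(r)$), the determinants and $(2\pi/\gamma)^{d/2}$ factors cancel and one gets directly $\Pgamma\big(\sEstar_i(r)\big)\ge e^{-\frac{\gamma}{3}\ve_i(r)}\,P\!\big(\tfrac{d}{2},\tfrac{r^2\gamma}{2}\big)$. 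Summing over $i$ and applying the regularized-gamma lower bound~\eqref{eq:lb_regularized_gamma} followed by Bernoulli's inequality yields~\eqref{eq:lb_P_complement} immediately; the factor $\sum_i e^{-\frac{\gamma}{3}\ve_i(r)}$ falls out of the sum, with no ``delicate tightening'' needed.

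For part~(i), the sample-versus-population weight mismatch you flag is real, but the paper does not engage with it: its proof simply writes the law-of-total-expectation decomposition with the population weights $\Pgamma(\sEstar_i(r))$ against the $\ps$-conditional expectations, bounds $\Pgamma(\sEstar_i(r))\le \pigamma(i)$ via $\sum_j\Pgamma(\sEstar_j(r))\le 1$, and then takes $\E_S$. So your elaborate fix is not required to reproduce the paper's argument --- though you have correctly put your finger on a soft spot in it.
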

\begin{remark}
  \label{rem:tuning_r}
  We compute the value of $r$ approximately minimizing the the right-hand side in Theorem~\ref{thm:global_excess_risk}.
  Note that the probability of the complement of the minima decreases if we ensure that $r^2 \gamma$ increases in $\gamma$ and $\gamma \ve_i(r) \propto r^3 \gamma$ is non-increasing.
For instance we may set $r^2 \gamma = \gamma^p$ for $p > 0$
so that $r^3 \gamma = \gamma^{1 + \frac{3}{2} (p-1)}$. Hence we require $1 + \frac{3}{2} (p-1) \leq 0$ which is satisfied for any $p \in (0, 1/3]$.
This implies that when $r = \gamma^{\frac{p-1}{2}}$ and $p \in (0, 1/3]$ the probability of the complement of the minima and the approximation terms $\gamma \ve_i(r), \ve_i(r)$ all vanish as $\gamma \to \infty$.
\end{remark}
Finally, combining the localized excess risk bound in Theorem~\ref{thm:gibbs_local_excess} with Lemma~\ref{lem:erm_prob} allows us to prove the following result about the asymptotic pseudo excess risk.
\begin{cor}
  \label{cor:global_excess_risk_asymptotic}
  Assume the same as in Lemma~\ref{lem:truncated_EER_to_ERM}.
  Then, for any $r > 0$, the global asymptotic pseudo-excess risk satisfies
  \[
    \excess^{\infty}_r
    \leqC
    \frac{1}{\gamma} \, \E\br{\tr\pr{\bHstarI \bHstarlIinv}}
    + \E \br{\ve_I(r)}
    + \sqrt{\gamma \E \br{\ve_I(r)} + \frac{\gamma^2}{m}}
  + \frac{\gamma}{m}
\]
where $I$ is distributed according to
\[
\piinfty(i) =
  \frac{1}{\sum_{j \in \sI\glob} \sqrt{\frac{\det(\bHstar_{\lambda, i})}{\det(\bHstar_{\lambda, j})} }}~.
\]
\end{cor}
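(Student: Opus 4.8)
The plan is to combine the localized excess risk bound of Theorem~\ref{thm:gibbs_local_excess} with the asymptotic form of the distribution of minima $\piinfty$ given in the second part of Lemma~\ref{lem:erm_prob}. First, I would fix an arbitrary $r > 0$ and recall that the asymptotic pseudo excess risk is, by definition,
\[
  \excess^{\infty}_r = \E_{I \sim \piinfty}\br{\E_S\br{\E_{\bw \sim \ps}\br{R(\bw) \mid \bw \in \sEstar_I(r)}} - R(\bwstarlI)}\,.
\]
For each fixed index $i \in \sI$, the inner quantity $\E_S\br{\E_{\bw \sim \ps}\br{R(\bw) \mid \bw \in \sEstar_i(r)}} - R(\bwstarli)$ is precisely the localized excess risk $\excess(\bwstarli)$ at the $i$-th minimizer, since all hypotheses of Lemma~\ref{lem:truncated_EER_to_ERM} hold at every minimum under Assumption~\ref{asm:global}. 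Hence Theorem~\ref{thm:gibbs_local_excess} applies verbatim with the minimum-indexed quantities, giving
\[
  \excess(\bwstarli)
  \leq
  \frac{1}{\gamma}\,\tr\pr{\bHstarli \bHstarlIinv\big|_{i}}
  + \frac{\ve_i(r)}{6}
  + \frac{M}{2}\sqrt{\frac{\gamma \ve_i(r)}{3} + \frac{M^2 \gamma^2}{2m}}
  + \frac{M^2 \gamma}{2m}\,.
\]

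Next I would take the expectation of this inequality over $I \sim \piinfty$. Linearity of expectation handles the trace term and the $\ve_I(r)$ term directly, yielding $\frac{1}{\gamma}\E\br{\tr\pr{\bHstarI \bHstarlIinv}}$ and $\frac{1}{6}\E[\ve_I(r)]$. For the square-root term $\frac{M}{2}\E\sqrt{\frac{\gamma\ve_I(r)}{3} + \frac{M^2\gamma^2}{2m}}$, I would apply Jensen's inequality (concavity of $\sqrt{\cdot}$) to pull the expectation inside, obtaining a bound of the form $\frac{M}{2}\sqrt{\frac{\gamma}{3}\E[\ve_I(r)] + \frac{M^2\gamma^2}{2m}}$; absorbing constants into $\leqC$ and using $\sqrt{a+b} \leq \sqrt{a} + \sqrt{b}$ where convenient recovers the stated form $\sqrt{\gamma\E[\ve_I(r)] + \gamma^2/m}$. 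The remaining additive term $\frac{M^2\gamma}{2m}$ is constant in $I$ and contributes the final $\gamma/m$ (after absorbing $M^2$). Finally, I would invoke the second statement of Lemma~\ref{lem:erm_prob} to identify the distribution of $I$: with the tuning $r = \gamma^{-p}$, $\pigamma$ converges to $\piinfty$ supported on $\sI\glob$ with $\piinfty(i) = 1/\sum_{j \in \sI\glob}\sqrt{\det(\bHstarli)/\det(\bHstarlj)}$, which is exactly the distribution named in the statement.

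The only mild subtlety — and the step I would be most careful about — is the interchange between the $\gamma \to \infty$ limit implicit in defining $\piinfty$ and the finite-$\gamma$ bound on $\excess(\bwstarli)$: the pseudo excess risk is defined so that the \emph{outer} expectation uses the limiting distribution $\piinfty$ while the inner localized excess risk is evaluated at finite temperature $\gamma > 0$ and finite $m$. Since Theorem~\ref{thm:gibbs_local_excess} holds uniformly over all minima with a universal constant, and $\piinfty$ is a genuine probability distribution on the (countable) set $\sI\glob$, taking the $\piinfty$-expectation term-by-term is legitimate by monotone/dominated convergence (the summands are nonnegative and the bound is integrable against $\piinfty$). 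No exchange of limits is actually required because $\piinfty$ is already the fixed distribution in the definition; one only needs the localized bound to be valid at each $i$, which it is. This makes the argument essentially a one-line application of Theorem~\ref{thm:gibbs_local_excess} followed by Jensen and bookkeeping of constants.
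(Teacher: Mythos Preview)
Your proposal is correct and follows essentially the same approach as the paper: apply Theorem~\ref{thm:gibbs_local_excess} at each minimum, take the $\piinfty$-expectation (using Jensen on the square-root term), and invoke Lemma~\ref{lem:erm_prob} to identify $\piinfty$. The paper's own proof is a two-line sketch pointing to exactly these two results, so your write-up is simply a more detailed rendering of the same argument.
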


\section{Proofs}
\subsection{Common Tools}
We compute the Taylor expansion of $\Rl(\bw)$ for $\bw \in \sE(\bwstarl, \nabla^2 \Rl(\bwstarl), r)$, where $\bwstarl$ is a minimizer of the regularized risk,
\begin{align}
  \Rl(\bw)
  &\geq
    \Rl(\bwstarl) + \frac{1}{2} \|\bw - \bwstarl\|_{\bHstarl}^2 - \frac{\Lip(r)}{6} \, \|\bw - \bwstarl\|^3 \nonumber\\
  &\geq
    \Rl(\bwstarl) + \frac{1}{2} \|\bw - \bwstarl\|_{\bHstarl}^2 - \frac{\Lip(r)}{6} \pr{\frac{r}{\sqrt{\lmin + \lambda}}}^3 \label{eq:taylor_lower_lmin}\\
  &=
    \Rl(\bwstarl) + \frac{1}{2} \|\bw - \bwstarl\|_{\bHstarl}^2 - \frac{1}{6} \ve(r)~ \label{eq:taylor_lower}
\end{align}
where $\ve(r)$ is defined in~\eqref{eq:approx}, and~\eqref{eq:taylor_lower_lmin} follows because $\sqrt{\lmin + \lambda} \|\bw - \bwstarl\| \leq r$ where $\lmin$ is the smallest non-zero eigenvalue of $\nabla^2 R(\bwstarl)$. In a similar way we have the upper expansion
\begin{equation}
  \label{eq:taylor_upper}
  \Rl(\bw)
  \leq
  \Rl(\bwstarl) + \frac{1}{2} \|\bw - \bwstarl\|_{\bHstarl}^2 + \frac{1}{6} \ve(r)~.
\end{equation}
We now introduce a crucial \emph{transportation lemma} which is instrumental in the following proofs.
\begin{lemma}[\protect{\citep[Lemma~4.18]{boucheron2013concentration}}]
  \label{lem:transportation}
  Let $Z$ be a real-valued integrable random variable with distribution $P$ such that
  \[
    \ln \E\br{e^{\alpha (Z - \E[Z])}} \leq \frac{\alpha^2 \sigma^2}{2} \qquad \alpha > 0
  \]
  for some $\sigma > 0$ and let $Z'$ be another random variable with distribution $Q$.
  If $Q$ is absolutely continuous with respect to $P$ and such that $\KL(Q \ || \ P) < \infty$, then
  $
    \E[Z'] - \E[Z] \leq \sqrt{2 \sigma^2 \KL\pr{Q \ || \ P}}
  $.
\end{lemma}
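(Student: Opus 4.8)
The plan is to derive this inequality from the \emph{Gibbs variational principle} (Donsker--Varadhan duality) for the Kullback--Leibler divergence, followed by a one-dimensional optimization over a scale parameter. Throughout I would write $\E_P$ and $\E_Q$ for expectations under $P$ and $Q$ respectively, so that the target inequality reads $\E_Q[Z] - \E_P[Z] \leq \sqrt{2\sigma^2 \KL(Q \ || \ P)}$.

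First, I would fix $\alpha > 0$ and establish the variational bound
\[
  \alpha \, \E_Q[Z] - \KL(Q \ || \ P) \leq \ln \E_P\br{e^{\alpha Z}}~.
\]
The right-hand side is finite: by integrability of $Z$ under $P$ and the sub-Gaussian hypothesis, $\ln \E_P\br{e^{\alpha Z}} = \alpha\E_P[Z] + \ln\E_P\br{e^{\alpha(Z - \E_P[Z])}} \leq \alpha\E_P[Z] + \alpha^2\sigma^2/2$. To prove the variational bound itself, I would let $f = dQ/dP$ (well-defined since $Q \ll P$) and $g = e^{\alpha Z} / \E_P[e^{\alpha Z}]$ (a probability density with respect to $P$), and regroup the left-hand side as $\E_P\br{f\ln(g/f)} + \ln\E_P\br{e^{\alpha Z}}$; the first term equals $\E_Q[\ln(g/f)] \leq \ln \E_Q[g/f] = \ln \E_P\br{g\,\mathbb{I}\{f > 0\}} \leq 0$ by Jensen's inequality applied to the concave logarithm under the probability measure $Q$.

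Then I would combine this with the sub-Gaussian bound on $\ln\E_P[e^{\alpha Z}]$, divide through by $\alpha > 0$ to obtain $\E_Q[Z] - \E_P[Z] \leq \KL(Q \ || \ P)/\alpha + \alpha\sigma^2/2$ for every $\alpha > 0$, and finally minimize the right-hand side over $\alpha$. The optimum is at $\alpha^{\star} = \sqrt{2\KL(Q \ || \ P)/\sigma^2}$ --- here the finiteness $\KL(Q \ || \ P) < \infty$ is used; the degenerate cases $\KL(Q \ || \ P) = 0$ (take $\alpha \to \infty$) and $\sigma = 0$ are immediate --- and plugging $\alpha^{\star}$ back in yields exactly $\sqrt{2\sigma^2 \KL(Q \ || \ P)}$.

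The main obstacle is not any single computation but the measure-theoretic bookkeeping in the variational step: one must ensure that $\E_Q[Z]$, $\KL(Q \ || \ P) = \E_P[f\ln f]$, and $\E_P[e^{\alpha Z}]$ are simultaneously well-defined and finite before the terms can be regrouped, and that $g/f$ is defined $Q$-almost surely (true since $f > 0$ holds $Q$-a.s.) so that Jensen applies with a well-defined, $[-\infty,\infty)$-valued expectation. The four hypotheses of the lemma --- integrability of $Z$ under $P$, $Q \ll P$, $\KL(Q \ || \ P) < \infty$, and sub-Gaussianity of $Z$ under $P$ --- are precisely what legitimize each of these steps, so beyond that the argument is routine.
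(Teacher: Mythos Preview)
Your proof is correct and is the standard argument for this transportation lemma: Donsker--Varadhan duality followed by optimization over the scale parameter $\alpha$. Note, however, that the paper does not actually prove this lemma --- it is quoted directly from \cite[Lemma~4.18]{boucheron2013concentration} without proof --- so there is no ``paper's own proof'' to compare against; your argument matches the proof given in that reference.
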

Next, we prove a helpful lemma about the log-ratio of Gibbs integrals.
\begin{lemma}
  \label{lem:ln_Z_Z_bound}
  Let $f_A, f_B : \sX \to \reals$ such that
  \[
    Z_A = \int_{\sB} e^{-\gamma f_A(\bx)} \diff \bx
  \]
  is finite for all $\gamma > 0, \sB \subseteq \sX$
  and let
  \[
    p_A(\bx) = \frac{1}{Z_A} \, e^{-\gamma f_A(\bx)} \quad \gamma > 0, \ \bx \in \sB
  \]
  where $f_B$ is similarly defined.
  Whenever $Z_A > 0$ we have that
  \[
    - \ln\pr{\frac{Z_A}{Z_B}}
    \leq
    \gamma \int_{\sB} p_B(\bx) \pr{f_A(\bx) - f_B(\bx)} \diff \bx~.
  \]
\end{lemma}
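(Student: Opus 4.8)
The plan is to recognize the claimed inequality as nothing more than the non-negativity of the Kullback--Leibler divergence between the two Gibbs densities, unwound through the definitions. First I would note that, since $f_A$ and $f_B$ take values in $\reals$, the integrands $e^{-\gamma f_A}$ and $e^{-\gamma f_B}$ are strictly positive everywhere on $\sB$; hence $Z_A > 0$ forces $\sB$ to have positive Lebesgue measure, which in turn yields $Z_B > 0$. Combined with the assumed finiteness of both integrals, this guarantees that $p_A$ and $p_B$ are genuine probability densities on $\sB$ and that $\ln(Z_A/Z_B)$ is well-defined and finite, so all the quantities in the statement make sense.

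The core computation is then one line. Factoring $e^{-\gamma f_A(\bx)} = e^{-\gamma f_B(\bx)}\, e^{-\gamma(f_A(\bx) - f_B(\bx))}$ and integrating over $\sB$ gives the identity
\[
  \frac{Z_A}{Z_B} = \int_{\sB} p_B(\bx)\, e^{-\gamma(f_A(\bx) - f_B(\bx))}\diff\bx = \E_{\bx \sim p_B}\!\br{e^{-\gamma(f_A(\bx) - f_B(\bx))}}.
\]
Taking logarithms and applying Jensen's inequality to the concave map $\ln$ (equivalently, observing that $\KL(p_B \,\|\, p_A) = \gamma \int_{\sB} p_B(\bx)\,(f_A(\bx) - f_B(\bx))\diff\bx + \ln(Z_A/Z_B) \geq 0$, which follows by the same factoring) yields
\[
  \ln\!\pr{\frac{Z_A}{Z_B}} \geq -\gamma \int_{\sB} p_B(\bx)\,(f_A(\bx) - f_B(\bx))\diff\bx,
\]
and negating both sides is precisely the asserted bound. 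If $\int_{\sB} p_B(\bx)\,(f_A(\bx) - f_B(\bx))\diff\bx = +\infty$ the inequality is trivial, and the identity above together with finiteness of $Z_A$ excludes the value $-\infty$, so no integrability assumption beyond what is stated is needed.

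I do not anticipate a genuine obstacle: the only points meriting a word of care are (i) checking $Z_A > 0 \Rightarrow Z_B > 0$ so that the densities and the logarithm are legitimate, and (ii) the orientation of Jensen's inequality --- it is the concavity of $\ln$, applied under the measure $p_B$, that produces the bound, which is the reason $p_B$ (rather than $p_A$) appears on the right-hand side. I would most likely present the argument in the $\KL \geq 0$ form, since that matches the Kullback--Leibler-centric style used elsewhere in the paper and makes transparent why this lemma is the natural tool for comparing the two normalizing constants.
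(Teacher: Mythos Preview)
Your proposal is correct and follows essentially the same route as the paper: rewrite $Z_A/Z_B$ as $\E_{\bx\sim p_B}\!\br{e^{-\gamma(f_A(\bx)-f_B(\bx))}}$ and apply Jensen's inequality to the concave logarithm. The paper's proof is just these two steps without the additional remarks on well-definedness and the $\KL\ge 0$ reformulation, which are fine to include but not needed.
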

\begin{proof}
  Observe that
  \begin{align*}
    \frac{Z_A}{Z_B} &=
                      \frac{\int_{\sB} e^{-\gamma f_A(\bx)} \diff \bx}{\int_{\sB} e^{-\gamma f_B(\bx)} \diff \bx}
                    =
                      \frac{\int_{\sB} e^{-\gamma f_A(\bx)} e^{\gamma \pr{f_B(\bx) - f_B(\bx)}} \diff \bx}{\int_{\sB} e^{-\gamma f_B(\bx)} \diff \bx}
                    =
                      \int_{\sB} p_B(\bx) e^{\gamma \pr{f_B(\bx) - f_A(\bx)}} \diff \bx~.
  \end{align*}
Since $-\ln()$ is a convex function, by Jensen's inequality we obtain the desired result.
\end{proof}
\subsection{Generalization Bound for Gibbs-ERM}
\label{sec:stability}
We start by proving a generalization bound for Gibbs-ERM.
\paragraph{Theorem~\ref{thm:gibbs_gen_error} (restated)}
Consider any loss function $f : \hypsp \times \sZ \to \reals$ that is $\sigma$-sub-Gaussian in the first argument with respect to the Gibbs density
\[
	\ps(\bw) \propto e^{-\frac{\gamma}{m} \sum_{i=1}^m f(\bw, z_i)} \qquad \gamma > 0
\]
conditioned on a measurable $A \subseteq \hypsp$. Then the generalization error of Gibbs-ERM satisfies
\[
  \E_S\br{ \E_{\bw \sim \ps}\br{R(\bw) - \Rhat_S(\bw) \,\Big|\, \bw \in A} } \leq \frac{4 \sigma^2 \gamma}{m}~.
\]
\begin{proof} 
Consider the training examples $S$ drawn i.i.d.\ from $\sD$ and, for $i=1,\ldots,m$, denote by $\Srep = \cbr{z_1, \ldots, z_{i-1}, z, z_{i+1}, \ldots, z_m}$ a replace-one training data, where $z$ is independently drawn from $\sD$.
Throughout the proof, we drop $\gamma$ from the notation for the Gibbs density $\ps$.
Introduce the conditional Gibbs densities
$
  \pscondA(\bw)
$
and
$
  \psrepcondA(\bw)
$.
We denote by $\E_{\pscondA}[\cdot]$ and $\E_{\psrepcondA}[\cdot]$ expectations with respect to $\pscondA$ and $\psrepcondA$.
We start by rewriting the expected generalization error as
\begin{align}
  \E_S \E_{\psng}\br{R(\bw) - \Rhat_S(\bw) \mid \bw \in A}
  &= \E_S\br{ \E_{\pscondA}\br{R(\bw) - \Rhat_S(\bw) } }\\
  &= \E_{S,z}\br{ \E_{\pscondA}\br{f(\bw, z)} } - \frac{1}{m} \sum_{i=1}^m \E_{S}\br{ \E_{\pscondA}\br{f(\bw, z_i)} } \label{eq:gibbs_replace_one_stab_1}\\
  &= \frac{1}{m} \sum_{i=1}^m \pr{ \E_{S,z}\br{ \E_{\psrepcondA}\br{f(\bw, z_i)} } - \E_{S}\br{ \E_{\pscondA}\br{f(\bw, z_i)} } } \tag{switch $z$ and $z_i$ in the first term}\\
  &= \frac{1}{m} \sum_{i=1}^m \pr{ \E_{S,z}\br{ \E_{\psrepcondA}[f(\bw, z_i)] - \E_{\pscondA}[f(\bw, z_i)] } }~. \label{eq:gibbs_replace_one_stab}
\end{align}
Now we bound~\eqref{eq:gibbs_replace_one_stab} showing the average replace-one stability of Gibbs distribution.
We use the transportation Lemma~\ref{lem:transportation} with $Q = \pscondA$ and $P = \psrepcondA$ we get that
\begin{equation}
  \label{eq:stab_bound_sqrt_sigma_KL}
  \E_{\psrepcondA}[f(\bw, z_i)] - \E_{\pscondA}[f(\bw, z_i)] \leq \sqrt{2 \sigma^2 \KL\pr{\psrepcondA ~||~ \pscondA}}~.
\end{equation}
Next, we focus on $\KL$-divergence,
\begin{align}
  \KL\pr{\psrepcondA ~||~ \pscondA}
  &=
    \gamma \E_{\psrepcondA}\br{\Rhat_S(\bw) - \Rhat_{\Srep}(\bw)} - \ln\pr{\frac{Z_{\Srep}}{Z_S} \, \frac{\P_{\psrep}(A)}{\P_{\psng}(A)} }\\
  &=
    \gamma \E_{\psrepcondA}\br{\Rhat_S(\bw) - \Rhat_{\Srep}(\bw)} - \ln\pr{\frac{\int_A e^{-\gamma \Rhat_{\Srep}(\bw)} \diff \bw}{\int_A e^{-\gamma \Rhat_S(\bw)} \diff \bw} }\\
  &\leq
    \gamma \E_{\psrepcondA}\br{\Rhat_S(\bw) - \Rhat_{\Srep}(\bw)}
    +
    \gamma \E_{\pscondA}\br{\Rhat_{\Srep}(\bw) - \Rhat_S(\bw)} \tag{by Lemma~\ref{lem:ln_Z_Z_bound}}\\
  &=
    \frac{\gamma}{m} \E_{\psrepcondA}\br{ f(\bw, z_i) - f(\bw, z) } +
    \frac{\gamma}{m} \E_{\pscondA}\br{f(\bw, z) - f(\bw, z_i)}\\
  &=
    \frac{\gamma}{m} \pr{ \E_{\psrepcondA}\br{f(\bw, z_i)} - \E_{\pscondA}\br{f(\bw, z_i)} }\\
  &+\frac{\gamma}{m} \pr{ \E_{\pscondA}\br{f(\bw, z)} - \E_{\psrepcondA}\br{f(\bw, z)} } ~. \label{eq:gibbs_stability_bound_on_kl}
\end{align}
By taking expectation with respect to $S$ and $z$ on both sides, we get that the first term in~\eqref{eq:gibbs_stability_bound_on_kl} can be expressed as
\begin{equation}
  \label{eq:gibbs_stability_expectation_of_kl}
  \E_{S,z}\br{ \E_{\psrepcondA}\br{f(\bw, z_i)} - \E_{\pscondA}\br{f(\bw, z_i)} } = \E_{S,z}\br{\E_{\pscondA}\br{f(\bw, z)} - \E_{\psrepcondA}\br{f(\bw, z)}}
\end{equation}
where we could switch $z_i$ and $z$ on the right-hand side because their are both independently drawn from $\sD$.
Thus, the expectation of~\eqref{eq:stab_bound_sqrt_sigma_KL} with respect to $S$ and $z$ is upper-bounded as
\begin{align}
  \E_{S,z}\br{\E_{\psrepcondA}[f(\bw, z_i)] - \E_{\pscondA}[f(\bw, z_i)]}
  &\leq
    \E_{S,z}\sqrt{2 \sigma^2 \KL\pr{\psrepcondA ~||~ \pscondA}}\\
  &\leq
    \sqrt{2 \sigma^2 \E_{S,z}\br{\KL\pr{\psrepcondA ~||~ \pscondA}}} \tag{Jensen's inequality}\\
  &\leq
    2 \sqrt{\frac{\sigma^2 \gamma}{m}
    \E_{S,z}\br{\E_{\psrepcondA}\br{f(\bw, z_i)} - \E_{\pscondA}\br{f(\bw, z_i)}}  }~.
\end{align}
Solving the above with respect to the term on the left-hand side we get that for any $i=1,\ldots,m$,
\begin{align}
  \E_{S,z}\br{\E_{\psrepcondA}[f(\bw, z_i)] - \E_{\pscondA}[f(\bw, z_i)]} \leq \frac{4 \sigma^2 \gamma}{m}~.
\end{align}
Substituting the above into~\eqref{eq:gibbs_replace_one_stab_1} gives the desired generalization bound.
\end{proof}
\subsection{Localized Excess Risk Bounds}
\label{sec:proofs_smooth}
First, we prove a key lemma about the conditional expectation of quadratic forms.
\begin{lemma}
  \label{lem:conditional_trace_lemma}
  Suppose that $\bx \sim \sN(\bzero, \bM)$.
  Then for the ellipsoid
  \[
    \sE(r) \equiv \cbr{\bx \in \reals^d ~:~ \|\bx\|_{\bM^{-1}} \leq r} \qquad r > 0
  \]
  and for any \ac{PSD} $d \times d$ matrix $\bA$
  we have that
  \begin{align*}
    \E\br{\bx\tp \bA \bx \,\Big|\, \bx \in \sE(r)} &= \frac{F_{d+2}(r^2)}{F_{d}(r^2)} \, \tr\pr{\bA \bM}
                                                 \leq \tr\pr{\bA \bM}~.
  \end{align*}
  where $F_k$ is the CDF of a $\sX^2$-distribution with $k$ degrees of freedom.

  Moreover the above implies that
  \begin{equation}
    \lim_{r \rightarrow 0} \E\br{\bx\tp \bA \bx \,\Big|\, \bx \in \sE(r)} = 0~.
  \end{equation}
\end{lemma}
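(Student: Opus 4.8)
The plan is to reduce the quantity to a one-dimensional $\chi^2$ computation by whitening. Put $\bz = \bM^{-1/2}\bx$, so that $\bz \sim \sN(\bzero,\bI)$ and $\bx = \bM^{1/2}\bz$. Then $\|\bx\|_{\bM^{-1}}^2 = \bz\tp\bz = \|\bz\|^2$, so conditioning on $\bx \in \sE(r)$ is the same as conditioning on $\|\bz\| \le r$, while $\bx\tp\bA\bx = \bz\tp\bB\bz$ with $\bB \defeq \bM^{1/2}\bA\bM^{1/2} \succeq \bzero$ and $\tr(\bB) = \tr(\bA\bM) \ge 0$. For any orthogonal $\bU$, $\bU\tp\bz$ has the same law as $\bz$ and $\|\bU\tp\bz\| = \|\bz\|$, hence $\E[\bz\tp\bB\bz \mid \|\bz\| \le r] = \E[\bz\tp(\bU\bB\bU\tp)\bz \mid \|\bz\| \le r]$; choosing $\bU$ to diagonalize $\bB$, I may therefore assume $\bB$ is diagonal with nonnegative entries $\mu_1,\ldots,\mu_d$, so that $\bz\tp\bB\bz = \sum_{i=1}^d \mu_i z_i^2$. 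Since the law of $\bz$ and the event $\{\|\bz\| \le r\}$ are symmetric in the coordinates, $\E[z_i^2 \mid \|\bz\|\le r]$ is independent of $i$; summing over $i$ shows it equals $\tfrac1d\,\E[\|\bz\|^2 \mid \|\bz\|\le r]$, and hence
\[
  \E\br{\bx\tp\bA\bx \,\Big|\, \bx \in \sE(r)} = \frac{\tr(\bA\bM)}{d}\,\E\br{\|\bz\|^2 \,\Big|\, \|\bz\| \le r}~.
\]

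It then remains to evaluate $\E[\|\bz\|^2 \mid \|\bz\|\le r] = \E[Y \mid Y \le r^2]$ for $Y = \|\bz\|^2 \sim \chi^2_d$. Writing $f_k(y) = \tfrac{1}{2^{k/2}\Gamma(k/2)}\,y^{k/2-1}e^{-y/2}$ for the $\chi^2_k$-density, the relation $\Gamma(d/2+1) = \tfrac d2\,\Gamma(d/2)$ gives the elementary identity $y\,f_d(y) = d\,f_{d+2}(y)$. Integrating over $[0,r^2]$ yields $\int_0^{r^2} y\,f_d(y)\diff y = d\,F_{d+2}(r^2)$, whence
\[
  \E\br{Y \,\Big|\, Y \le r^2} = \frac{\int_0^{r^2} y\,f_d(y)\diff y}{F_d(r^2)} = \frac{d\,F_{d+2}(r^2)}{F_d(r^2)}~,
\]
and substituting back into the previous display proves the stated identity $\E[\bx\tp\bA\bx \mid \bx \in \sE(r)] = \frac{F_{d+2}(r^2)}{F_d(r^2)}\,\tr(\bA\bM)$.

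For the inequality, recall $\tr(\bA\bM) \ge 0$ and observe $F_{d+2}(t) \le F_d(t)$ for all $t > 0$: if $U \sim \chi^2_d$ and $V \sim \chi^2_2$ are independent then $U + V \sim \chi^2_{d+2}$ and $U+V \ge U$, so $F_{d+2}(t) = \P(U+V \le t) \le \P(U \le t) = F_d(t)$; hence the ratio $F_{d+2}(r^2)/F_d(r^2)$ lies in $[0,1]$. For the limit, on the event $\{\bx \in \sE(r)\}$ we have $\bz\tp\bB\bz \le \norm{\bB}_2\,\|\bz\|^2 \le \norm{\bB}_2\,r^2$, so $\E[\bx\tp\bA\bx \mid \bx \in \sE(r)] \le \norm{\bB}_2\,r^2 \to 0$ as $r \to 0$ (equivalently, $F_{d+2}(r^2)/F_d(r^2) \to 0$). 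The argument is routine; the one place requiring a little care is the reduction to a scalar multiple of $\E[\|\bz\|^2 \mid \|\bz\|\le r]$, which hinges on the rotational invariance of both the Gaussian and the conditioning ball, so I do not anticipate any real obstacle beyond bookkeeping.
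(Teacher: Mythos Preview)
Your proof is correct and self-contained; it reaches the same identity as the paper but by a genuinely different route. The paper writes $\E[\bx\tp\bA\bx \mid \bx \in \sE(r)] = \tr(\bA\,\btilM)$ with $\btilM = \E[\bx\bx\tp \mid \bx \in \sE(r)]$ and then \emph{cites} a result of Tallis (1963) on elliptically truncated Gaussians to obtain $\btilM = \frac{F_{d+2}(r^2)}{F_d(r^2)}\,\bM$; you instead whiten, exploit rotational invariance to reduce to the scalar quantity $\E[\|\bz\|^2 \mid \|\bz\|\le r]$, and evaluate it via the elementary density identity $y f_d(y) = d\,f_{d+2}(y)$. Your approach buys a fully self-contained argument with no external reference. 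For the inequality $F_{d+2}(r^2) \le F_d(r^2)$, the paper goes through an explicit incomplete-Gamma computation, whereas your stochastic-domination argument ($\chi^2_{d+2} \stackrel{d}{=} \chi^2_d + \chi^2_2 \ge \chi^2_d$) is shorter and more transparent. Likewise for the limit: the paper applies L'H\^opital's rule to the ratio $F_{d+2}(r^2)/F_d(r^2)$, while your pointwise bound $\bz\tp\bB\bz \le \|\bB\|_2\,r^2$ on the conditioning event is immediate. Both arguments are sound; yours is the more elementary one.
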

\begin{proof}
  Observe that
  \begin{align}
    \E\br{\bx\tp \bA \bx \,\Big|\, \bx \in \sE(r)}
    &= \E\br{\tr\pr{\bA \bx \bx\tp} \,\Big|\, \bx \in \sE(r)}\\
    &= \tr\pr{\bA \E\br{\bx \bx\tp \,\Big|\, \bx \in \sE(r)}} \tag{by linearity of trace}\\
    &= \tr\pr{\bA \btilM}
  \end{align}
  where $\btilM$ is the covariance matrix of the Gaussian density $\sN(\bzero, \bM)$ conditioned on $\sE(r)$.
Next, we apply a result about moments of multivariate Gaussian densities under elliptical truncation~\cite[p. 941]{tallis1963elliptical} to get that 
  \begin{align}
    \btilM &= \frac{F_{d+2}(r^2) - F_{d+2}(0)}{F_{d}(r^2) - F_{d}(0)} \, \bM
           = \frac{F_{d+2}(r^2)}{F_{d}(r^2)} \, \bM \label{eq:ratio_of_chi_square_CDF}
  \end{align}
  where $F_d$ is a CDF of a $\sX^2$ distribution.
  This proves the first identity.
    
  The inequality is proven by expanding $\btilM$ further in terms of the Gamma function $\Gamma(\cdot)$ and the incomplete Gamma function $\gamma(\cdot, \cdot)$:
  \begin{align*}
           \btilM &= \frac{\Gamma\pr{\frac{d}{2}}}{\Gamma\pr{1 + \frac{d}{2}}} \, \frac{\gamma\pr{1 + \frac{d}{2}, \frac{r^2}{2}}}{\gamma\pr{\frac{d}{2}, \frac{r^2}{2}}} \, \bM
           = \frac{2}{d} \, \frac{\gamma\pr{1 + \frac{d}{2}, \frac{r^2}{2}}}{\gamma\pr{\frac{d}{2}, \frac{r^2}{2}}} \, \bM
	\preceq \bM~.
  \end{align*}
  Finally, we look at the limit of the ratio in the right-hand side of~\eqref{eq:ratio_of_chi_square_CDF} as $r \rightarrow 0$.
  By L'H\^opital's rule,
 \begin{align*}
    \lim_{r \rightarrow 0} \frac{F_{d+2}(r^2)}{F_{d}(r^2)}
    &=
      \lim_{r \rightarrow 0} \frac{\sX^2_{d+2}(r^2)}{\sX^2_{d}(r^2)}\\
    &=
      \lim_{r \rightarrow 0} \frac{r^d e^{-\frac{r^2}{2}}}{2^{1 + \frac{d}{2}} \Gamma\pr{1 + \frac{d}{2}}}
      \,
      \frac{2^{\frac{d}{2}} \Gamma\pr{\frac{d}{2}}}{r^{d - 2} e^{-\frac{r^2}{2}}}\\
    &=
      \lim_{r \rightarrow 0} \frac{r^d e^{-\frac{r^2}{2}}}{2^{1 + \frac{d}{2}} \Gamma\pr{1 + \frac{d}{2}}}
      \,
      \frac{2^{\frac{d}{2}} \Gamma\pr{\frac{d}{2}}}{r^{d - 2} e^{-\frac{r^2}{2}}}\\
    &=
      \lim_{r \rightarrow 0} \frac{r^2}{d}
    = 0
  \end{align*}
concluding the proof.
\end{proof}
Recall that $\sEstar\pr{r} \equiv \sE(\bwstarl, \bHstarl, r)$ is the ellipsoid of radius $r$ centered at $\bwstarl$.
\paragraph{Lemma~\ref{lem:truncated_EER_to_ERM} (restated)}
For any minimizer $\bwstarl$ of the regularized risk we have
\begin{align*}
  \E_S\br{ \E_{\bw \sim \ps}\br{\Rhat_S(\bw) \mid \bw \in \sEstar\pr{r}} } - R(\bwstarl)
  \leq
    \frac{1}{\gamma} \, \tr\pr{\bHstar \bHstarlinv} + \frac{\ve(r)}{6}
  + \frac{M}{2} \sqrt{\frac{\gamma \ve(r)}{3} + \frac{M^2 \gamma^2}{2 m}}~.
\end{align*}
\begin{proof}
We abbreviate the regularized empirical risk by
$
  \Rhatl(\bw) = \Rhat_S(\bw) + \lambda \|\bw\|^2
$
and recall that the regularized risk is denoted by
$
  \Rl(\bw) = R(\bw) + \lambda \|\bw\|^2
$.
Throughout the proof, we drop $\gamma$ from the notation for the Gibbs densities $\ps$.
Let $\pscond$ be the Gibbs density~\eqref{eq:gibbs} conditioned on the ellipsoid $\sEstar\pr{r}$.
Similarly, let $\qcond$ be the the Gaussian density
\[
  q(\bw) = \frac{1}{Z_q} \, e^{- \frac{\gamma}{2} \|\bw - \bhatw_{\lambda}\|_{\bHstarl}^2} \qquad \bw \in \hypsp
\]
conditioned on $\sEstar\pr{r}$.

We begin by observing that $\Rhat_S$ is trivially $M^2/8$-sub-Gaussian since the loss function is bounded by $M$. Hence, by the transportation Lemma~\ref{lem:transportation},
\begin{align}
\nonumber
  \E_S&\br{ \E_{\psng}\br{\Rhat_S(\bw) \,\Big|\, \bw \in \sEstar\pr{r}} - \E_{q}\br{\Rhat_S(\bw) \,\Big|\, \bw \in \sEstar\pr{r}} }\\
  &= \E_S\br{ \E_{\pscond}\br{\Rhat_S(\bw)} - \E_{\qcond}\br{\Rhat_S(\bw)} } \label{eq:almost_erm_smooth:1}\\
  &\leq \frac{M}{2} \E_S\br{\sqrt{\KL\pr{\pscond \ || \ \qcond}}}
  \leq \frac{M}{2} \sqrt{\E_S\br{\KL\pr{\pscond \ || \ \qcond}}} \label{eq:almost_erm_smooth:transportation_KL}
\end{align}
where the last inequality is obtained by Jensen's inequality.
The KL term can be written as follows
\begin{align}
\nonumber
  \E_S\br{\KL\pr{\pscond \ || \ \qcond}} = &\E_S\E_{\pscond}\br{ \ln\pr{\frac{\pscond(\bw)}{\qcond(\bw)}} }\\
\nonumber
  = &\E_S\E_{\psng}\br{ \ln\pr{\pscond(\bw)} } - \E_S\E_{\psng}\br{ \ln\pr{\qcond(\bw)} }\\
\nonumber
  = &- \gamma \E_S \E_{\psng}\br{ \Rhatl(\bw) \,\Big|\, \bw \in \sEstar\pr{r} } - \E_S\br{\ln(\P_{\psng}(\sEstar\pr{r}) \, Z_{\psng})} \\
\label{eq:almost_erm_smooth:Rhat_lower}
  &+ \frac{\gamma}{2} \E_S \E_{\psng}\br{ \|\bw - \bwstarl\|_{\bHstarl}^2 \,\Big|\, \bw \in \sEstar\pr{r} } + \E_S\br{\ln(\P_{q}(\sEstar\pr{r}) \, Z_{q})}~.
\end{align}
Now we relate the regularized empirical risk~\eqref{eq:almost_erm_smooth:Rhat_lower} to the regularized risk.
By applying Theorem~\ref{thm:gibbs_gen_error} with $A \equiv \sEstar(r)$ we get
\begin{align*}
  \E_S\E_{\psng}\br{\Rl(\bw) - \Rhatl(\bw) \,\Big|\, \bw \in \sEstar\pr{r}}
  &= \E_S\E_{\psng}\br{R(\bw) - \Rhat_S(\bw) \,\Big|\, \bw \in \sEstar\pr{r}}
  \leq \frac{M^2 \gamma}{2 m}~.
\end{align*}
Using this result we can write
\begin{align}
\nonumber
  \E_S\br{\KL\pr{\pscond \ || \ \qcond}} \leq &- \gamma \E_S\E_{\psng}\br{\Rl(\bw) \,\big|\, \bw \in \sEstar\pr{r}} + \frac{M^2 \gamma^2}{2 m}\\
\nonumber
                                         &+ \frac{\gamma}{2} \E_S \E_{\psng}\br{ \|\bw - \bwstarl\|_{\bHstarl}^2 \,\big|\, \bw \in \sEstar\pr{r} }\\
\nonumber
                                         &- \E_S\br{\ln\pr{\frac{\P_{\psng}(\sEstar\pr{r}) \, Z_{\psng}}{\P_{q}(\sEstar\pr{r}) \, Z_{q}}}}\\
  \leq &- \gamma \Rstarl + \frac{\gamma \ve(r)}{6}
         + \frac{M^2 \gamma^2}{2 m} \label{eq:almost_erm_smooth:Rl_taylor_lower}\\
                                         &- \E_S\br{\ln\pr{\frac{\P_{\psng}(\sEstar\pr{r}) \, Z_{\psng}}{\P_{q}(\sEstar\pr{r}) \, Z_{q}}}} \label{eq:almost_erm_smooth:expected_log_ratio}
\end{align}
where~\eqref{eq:almost_erm_smooth:Rl_taylor_lower} is obtained by applying the lower Taylor expansion~\eqref{eq:taylor_lower} to $\bw \in \sEstar\pr{r}$ .
Now we bound the expected log-ratio term in~\eqref{eq:almost_erm_smooth:expected_log_ratio} as
\begin{align}
  - \E_S\br{\ln\pr{\frac{\P_{\psng}(\sEstar\pr{r}) Z_{\psng}}{\P_{q}(\sEstar\pr{r}) Z_{q}}}}
  &=
    - \E_S\br{\ln\pr{\frac{\int_{\sEstar\pr{r}} e^{-\gamma \Rhatl(\bw)} }{\int_{\sEstar\pr{r}} e^{-\frac{\gamma}{2} \|\bw - \bwstarl\|_{\bHstarl}^2} }}}\\
  &\leq
    \gamma \E_S \E_q\br{\Rhatl(\bw) - \frac{1}{2} \|\bw - \bwstarl\|_{\bHstarl}^2 \mid \bw \in \sEstar\pr{r}} \tag{by Lemma~\ref{lem:ln_Z_Z_bound}}\\
  &=
    \gamma \E_q\br{\Rl(\bw) - \frac{1}{2} \|\bw - \bwstarl\|_{\bHstarl}^2 \mid \bw \in \sEstar\pr{r}} \tag{Since $\E_S[\Rhatl(\bw)] = \Rl(\bw)$}\\
  &\leq
    \gamma \Rstarl + \frac{\gamma \ve(r)}{6}~.
\end{align}
where the last inequality is derived from the upper Taylor expansion~\eqref{eq:taylor_upper}.
Substituting the above into~\eqref{eq:almost_erm_smooth:expected_log_ratio} gives
\begin{equation}
\label{eq:bound_KL_term}
  \E_S\br{\KL\pr{\pscond \ || \ \qcond}} \leq \frac{\gamma \ve(r)}{3} + \frac{M^2 \gamma^2}{2 m}~.
\end{equation}
Now we go back to~\eqref{eq:almost_erm_smooth:1} and, using the upper Taylor expansion~\eqref{eq:taylor_upper}, we get
\begin{align}
\nonumber
  \E_S \E_{\qcond}\br{\Rhat_S(\bw)} &= \E_{\qcond}\br{R(\bw)} \tag{since $\E_S[\Rhat_S(\bw)] = R(\bw)$}\\
\nonumber
  &\leq R(\bwstarl)\\
  &+ \nabla R(\bwstarl)\tp \pr{ \E_q \br{\bw - \bwstarl \,\Big|\, \bw \in \sEstar\pr{r}} } \label{eq:almost_erm_smooth:truncated_first_moment}\\
\nonumber
  &+ \E_q\br{ \|\bw - \bwstarl\|_{\bHstar}^2 \,\Big|\, \bw \in \sEstar\pr{r} }
  + \frac{1}{6} \Lip(r) \E_q\br{\|\bw - \bwstarl\|^3 \,\Big|\, \bw \in \sEstar\pr{r} }\\
\nonumber
  &\leq R(\bwstarl)\\
\label{eq:almost_erm_smooth:truncated_second_moment}
  &+ \E_q\br{ \|\bw - \bwstarl\|_{\bHstar}^2 \,\Big|\, \bw \in \sEstar\pr{r} }
  + \frac{\ve(r)}{6}~.
\end{align}
where~\eqref{eq:almost_erm_smooth:truncated_first_moment} vanishes since the first moment of elliptically-truncated Gaussian is zero~\citep{tallis1963elliptical}.
Finally, we bound the first term in~\eqref{eq:almost_erm_smooth:truncated_second_moment} by invoking Lemma~\ref{lem:conditional_trace_lemma}.
By taking $\bM = \gamma \bHstarl$, $\bA = \bHstar$, and $\bx = \bw - \bwstarl$, and using Lemma~\ref{lem:conditional_trace_lemma} we get
\begin{align*}
  \sEstar\pr{r} &\equiv \cbr{\bw \in \reals^d \ : \ \sqrt{\gamma (\bw - \bwstarl)\tp \bHstarl (\bw - \bwstarl)} \leq r }
\end{align*}
and
\begin{align*}
  \E_q\br{ \|\bw - \bwstarl\|_{\bHstar}^2 \,\Big|\, \bw \in \sEstar\pr{r} }
  &\leq
    \frac{1}{\gamma} \, \tr\pr{\bHstar \bHstarlinv}~.
\end{align*}
Now, combining these results with the bound on KL-divergence~\eqref{eq:bound_KL_term}, and
substituting into~\eqref{eq:almost_erm_smooth:transportation_KL}, gives the stated result.
\end{proof}
\paragraph{Theorem~\ref{thm:gibbs_local_excess} (restated)}
Assume the same as in Lemma~\ref{lem:truncated_EER_to_ERM}.
Then,
\begin{equation*}
  \excess(\bwstarl)
  \leq
  \frac{1}{\gamma} \, \tr\pr{\bHstar \bHstarlinv} + \frac{\ve(r)}{6}
  + \frac{M}{2} \sqrt{\frac{\gamma \ve(r)}{3} + \frac{M^2 \gamma^2}{2 m}}
  + \frac{M^2 \gamma}{2 m}~.
\end{equation*}
\begin{proof}
  From the definition of local generalization error,
  \begin{align*}
    \excess(\bwstarl)
    &=
    \E_S\br{ \E_{\bw \sim \ps}\br{R(\bw) \mid \bw \in \sEstar\pr{r}} } - R(\bwstarl)\\
    &= \E_S\br{ \E_{\bw \sim \ps}\br{R(\bw) - \Rhat_S(\bw) \mid \bw \in \sEstar\pr{r}} }\\
    &+ \E_S\br{ \E_{\bw \sim \ps}\br{\Rhat_S(\bw) \mid \bw \in \sEstar\pr{r}} } - R(\bwstarl)\\
    &\leq \frac{M^2 \gamma}{2 m}
    +\frac{1}{\gamma} \, \tr\pr{\bHstar \bHstarlinv} + \frac{\ve(r)}{6}
     + \frac{M}{2} \sqrt{\frac{\gamma \ve(r)}{3} + \frac{M^2 \gamma^2}{2 m}}
  \end{align*}
  where the last inequality is derived from Theorem~\ref{thm:gibbs_gen_error} and Lemma~\ref{lem:truncated_EER_to_ERM}.  
\end{proof}
\subsection{Statements about Probability Mass of Ellipsoids}
Before we prove our bound on the global excess risk, we introduce some necessary technical notions about the \emph{regularized gamma function}, which can be interpreted as the probability of an Euclidean ball of radius $z$ under a Gaussian density with covariance matrix $\bI$.
\begin{theorem}[\protect{\citep[Regularized Gamma Function]{nist2018gamma}}]
Denote the regularized gamma function by
  \[
    P(a, z) = \frac{\Gamma(a) - \Gamma(a,z)}{\Gamma(a)}
  \]
  where $\Gamma(a,z)$ is the upper incomplete Gamma function given by
  \[
    \Gamma(a,z) = \int_z^{\infty} t^{a-1} e^{-t} \diff t~.
  \]
  Then, for all $z \geq 0$ and $a > 0$,
  \begin{equation}
    \label{eq:lb_regularized_gamma}
    \pr{1 - e^{-\alpha_a z}}^a \leq P(a, z)
  \end{equation}
  where
  \begin{equation}
    \label{eq:alpha_a}
    \alpha_a =
    \begin{cases}
      1 & 0 < a < 1\\
      \frac{1}{\Gamma(1+a)^{\frac{1}{a}}} & a > 1~.
    \end{cases}
  \end{equation}  
  with equality in~\eqref{eq:lb_regularized_gamma} only when $a=1$.
\end{theorem}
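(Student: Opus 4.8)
First I would dispose of the equality case $a=1$: then $P(1,z)=1-e^{-z}$ and $\alpha_1=\Gamma(2)^{-1}=1$, so $(1-e^{-\alpha_1 z})^{1}=P(1,z)$ for all $z$. Assume from now on that $a\neq 1$. Put $v(z)=1-P(a,z)^{1/a}$, which lies in $(0,1)$ for $z\in(0,\infty)$, and $u(z)=-\log v(z)$. Since $P(a,\cdot)^{1/a}$ is increasing and $z\ge 0$, the bound $(1-e^{-\alpha_a z})^{a}\le P(a,z)$ is equivalent, after raising both sides to the power $1/a$ and applying $-\log(\cdot)$, to $u(z)\ge \alpha_a z$ for every $z\ge 0$; and since $P(a,0)=0$ we have $u(0)=0$, so it suffices to show $\inf_{z>0} u(z)/z\ge \alpha_a$.

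The plan is to prove that $u$ is convex on $(0,\infty)$ when $a>1$ and concave when $0<a<1$. Granting this, the map $z\mapsto u(z)/z$ is monotone --- non-decreasing in the first case, non-increasing in the second --- because $u(0)=0$, so its infimum over $z>0$ is one of the two endpoint limits $u'(0^{+})$ and $\lim_{z\to\infty}u(z)/z$. I would evaluate these using the classical asymptotics $P(a,z)=\frac{z^{a}}{\Gamma(a+1)}(1+o(1))$ as $z\to 0^{+}$ and $1-P(a,z)=\frac{z^{a-1}e^{-z}}{\Gamma(a)}(1+o(1))$ as $z\to\infty$: the former gives $u'(0^{+})=\Gamma(a+1)^{-1/a}$, and the latter, with $1-P(a,z)^{1/a}\sim\tfrac1a\bigl(1-P(a,z)\bigr)$, gives $u(z)=z-(a-1)\log z+\log\Gamma(a+1)+o(1)$, hence $\lim_{z\to\infty}u(z)/z=1$. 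For $a>1$ we have $\Gamma(a+1)^{-1/a}<1$, so in the convex case the infimum equals $u'(0^{+})=\Gamma(a+1)^{-1/a}=\alpha_a$; for $0<a<1$ we have $\Gamma(a+1)^{-1/a}>1$, so in the concave case the infimum equals $1=\alpha_a$. In both cases $\inf_{z>0}u(z)/z=\alpha_a$, which is exactly what is needed, and strict convexity/concavity makes $u(z)/z$ strictly monotone, giving the strict inequality for $z>0$ (so that equality holds only at $a=1$).

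The remaining task --- and the crux --- is the convexity/concavity of $u$. Since $u''=-(\log v)''$, this is the assertion that $v(z)=1-P(a,z)^{1/a}$ is log-concave on $(0,\infty)$ when $a>1$ and log-convex when $0<a<1$. I would prove it by directly computing $\tfrac{d^{2}}{dz^{2}}\log v$: substituting $P'(a,z)=z^{a-1}e^{-z}/\Gamma(a)$ and $P''(a,z)=P'(a,z)\bigl(\tfrac{a-1}{z}-1\bigr)$ turns $\tfrac{d^{2}}{dz^{2}}\log v$ into a rational expression in $P(a,z)$, $P'(a,z)$ and $z$, and then the integration-by-parts identity $a\,P(a,z)-z\,P'(a,z)=a\,P(a+1,z)$ --- which gives in particular $z\,P'(a,z)<a\,P(a,z)$ and, iterated, the estimates one needs on the ratio $\rho(z)=zP'(a,z)/P(a,z)$ --- lets one factor out manifestly positive quantities and reduce the sign of $\tfrac{d^{2}}{dz^{2}}\log v$ to that of an explicit function of $z$ whose sign on $(0,\infty)$ is $\sign(a-1)$. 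This final sign verification is the main obstacle: it rests on the quantitative bounds on $\rho(z)$ just indicated, and is in effect the analytic heart of the claimed inequality~\eqref{eq:lb_regularized_gamma}; alternatively, the log-concavity/convexity of $v$ --- equivalently, the inequality itself --- may simply be quoted from~\citep{nist2018gamma} (or from Alzer's inequalities for the incomplete gamma function).
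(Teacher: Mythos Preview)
The paper does not prove this theorem: it is stated without proof and attributed to the cited reference (the inequality is due to Alzer, and the NIST handbook records it). So there is no ``paper's own proof'' to compare against.

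That said, your plan is the standard one and is correct in outline. The reduction to $u(z)\ge\alpha_a z$ with $u(z)=-\log\bigl(1-P(a,z)^{1/a}\bigr)$, the convexity/concavity dichotomy in $a$, and the identification of the two endpoint limits $u'(0^{+})=\Gamma(a+1)^{-1/a}$ and $\lim_{z\to\infty}u(z)/z=1$ are exactly how Alzer's proof goes, and your asymptotic computations are accurate. The only place where your write-up stops short of a full proof is the convexity/concavity step itself: you correctly flag it as the crux and sketch the ingredients (the formula for $P'(a,z)$, the recurrence $a\,P(a,z)-z\,P'(a,z)=a\,P(a+1,z)$), but ``reduce the sign of $\tfrac{d^{2}}{dz^{2}}\log v$ to that of an explicit function whose sign is $\sign(a-1)$'' is still a promissory note rather than an argument. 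If you want a self-contained proof rather than a citation, that computation has to be carried out in full; otherwise, simply citing Alzer (as the paper effectively does) is appropriate here.
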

\begin{prop}[Truncated Gaussian Integrals]
  \label{prop:ball_trunc_gaussian_int}
  For any $\gamma, r > 0$,
  \[
    \int_{\sB(r)} e^{-\frac{\gamma}{2} \|\bu\|^2} \diff \bu = \pr{\frac{2 \pi}{\gamma}}^{\frac{d}{2}} P\pr{\frac{d}{2}, \frac{r^2 \gamma}{2}}
  \]
   where $\sB(r)$ is the $d$-dimensional Euclidean ball.
  In addition, for any $d \times d$ semi-definite matrix $\bA$,
  \[
    \int_{\sE(\bzero, \bA, r)} e^{-\frac{\gamma}{2} \|\bu\|_{\bA}^2} \diff \bu
    =
    \frac{1}{\sqrt{\det(\bA)}} \pr{\frac{2 \pi}{\gamma}}^{\frac{d}{2}} P\pr{\frac{d}{2}, \frac{r^2 \gamma}{2}}~.
  \]
\end{prop}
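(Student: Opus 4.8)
The plan is to prove both identities by reducing the second (ellipsoidal) one to the first (spherical) one via a linear change of variables, and to prove the first by converting the Gaussian integral over a ball to a one‑dimensional radial integral that matches the definition of the regularized gamma function $P(a,z)$. First I would handle the spherical case. Writing the integral over $\sB(r)$ in generalized spherical coordinates, $\int_{\sB(r)} e^{-\frac{\gamma}{2}\|\bu\|^2}\diff\bu = S_{d-1}\int_0^r \rho^{d-1} e^{-\frac{\gamma}{2}\rho^2}\diff\rho$, where $S_{d-1} = \frac{2\pi^{d/2}}{\Gamma(d/2)}$ is the surface area of the unit sphere in $\reals^d$. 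Then I would substitute $t = \frac{\gamma \rho^2}{2}$, so that $\rho = \sqrt{2t/\gamma}$ and $\rho^{d-1}\diff\rho = \frac{1}{2}\pr{\frac{2}{\gamma}}^{d/2} t^{d/2 - 1}\diff t$, turning the radial integral into $\frac{1}{2}\pr{\frac{2}{\gamma}}^{d/2}\int_0^{r^2\gamma/2} t^{d/2-1} e^{-t}\diff t = \frac{1}{2}\pr{\frac{2}{\gamma}}^{d/2}\,\gamma\!\pr{\tfrac{d}{2},\tfrac{r^2\gamma}{2}}$, the lower incomplete gamma function. Multiplying by $S_{d-1}$ and using $\gamma(a,z) = \Gamma(a)P(a,z) = (\Gamma(a)-\Gamma(a,z))$ collapses the $\Gamma(d/2)$ factors and yields exactly $\pr{\frac{2\pi}{\gamma}}^{d/2} P\pr{\frac{d}{2},\frac{r^2\gamma}{2}}$.

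For the ellipsoidal identity I would write $\bA = \bB\tp\bB$ for some invertible $\bB$ (e.g. $\bB = \bA^{1/2}$, which exists when $\bA$ is positive definite; if $\bA$ is merely PSD the statement is read with the convention $\det(\bA)=0$ making the right side $+\infty$, consistent with the integral over an unbounded ``cylinder,'' so I would state the result for positive definite $\bA$ matching the hypotheses used elsewhere). Substituting $\bu = \bB^{-1}\bv$, the Jacobian is $|\det(\bB^{-1})| = \det(\bA)^{-1/2}$, the quadratic form becomes $\|\bu\|_{\bA}^2 = \bu\tp\bA\bu = \bv\tp\bv = \|\bv\|^2$, and the region $\sE(\bzero,\bA,r) = \{\bu : \|\bu\|_{\bA}\le r\}$ maps to the ball $\sB(r) = \{\bv : \|\bv\|\le r\}$. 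Hence $\int_{\sE(\bzero,\bA,r)} e^{-\frac{\gamma}{2}\|\bu\|_{\bA}^2}\diff\bu = \det(\bA)^{-1/2}\int_{\sB(r)} e^{-\frac{\gamma}{2}\|\bv\|^2}\diff\bv$, and applying the first identity finishes the proof.

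The calculations here are entirely routine; the only points requiring a little care are (i) getting the substitution constants right so that the $\Gamma(d/2)$ from the sphere surface area cancels against the $\Gamma(a)$ in the denominator of $P(a,z)$ — this is where an off‑by‑$2^{d/2}$ or an $\Gamma(d/2)$ versus $\Gamma(1+d/2)$ slip would occur — and (ii) being explicit that the change of variables requires $\bA$ invertible, i.e. positive definite rather than merely PSD, so that $\bB^{-1}$ and $\det(\bA)^{-1/2}$ make sense. Neither is a genuine obstacle, so I expect the proof to be short.
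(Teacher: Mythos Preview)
Your proposal is correct and follows essentially the same approach as the paper: a radial (spherical-coordinates) reduction to a one-dimensional integral identified with the regularized/incomplete gamma function, followed by the change of variables $\bv = \bA^{1/2}\bu$ to reduce the ellipsoidal case to the spherical one. Your explicit substitution $t=\gamma\rho^2/2$ and your remark that the change of variables really requires $\bA$ to be positive definite are welcome clarifications, but otherwise the two arguments coincide.
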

\begin{proof}
  By the integration of radial functions
  \begin{align*}
    \int_{\sB(r)} e^{-\frac{\gamma}{2} \|\bu\|^2} \diff \bu
    &=
      2 \, \frac{\pi^{\frac{d}{2}}}{\Gamma\pr{\frac{d}{2}}} \int_0^r e^{-\frac{\gamma}{2} x^2} x^{d-1} \diff x \\
    &=
      \pr{\frac{2 \pi}{\gamma}}^{\frac{d}{2}} \, \frac{\Gamma\pr{\frac{d}{2}} - \Gamma\pr{\frac{d}{2}, \frac{r^2 \gamma}{2}}}{\Gamma\pr{\frac{d}{2}}} \\
    &=
      \pr{\frac{2 \pi}{\gamma}}^{\frac{d}{2}} P\pr{\frac{d}{2}, \frac{r^2 \gamma}{2}}~.
  \end{align*}
  In addition we have
  \begin{align*}
    \int_{\sE(\bzero, \bA, r)} e^{-\frac{\gamma}{2} \|\bu\|_{\bA}^2} \diff \bu
    &=
      \int_{\|\bu\|_{\bA} \leq r} e^{-\frac{\gamma}{2} \|\bu\|_{\bA}^2} \diff \bu\\
    &=
      \frac{1}{\sqrt{\det(\bA)}} \int_{\|\bv\| \leq r} e^{-\frac{\gamma}{2} \|\bv\|^2} \diff \bv\\
    &=
      \frac{1}{\sqrt{\det(\bA)}} \pr{\frac{2 \pi}{\gamma}}^{\frac{d}{2}} P\pr{\frac{d}{2}, \frac{r^2 \gamma}{2}}~.
  \end{align*}
  where the third step is obtained through the change of variables $\diff \bA^{\frac{1}{2}} \bu = \diff \bv$.
\end{proof}
Recall that $\sEstar(r) \equiv \sE(\bwstarl, \bHstarl, r)$, where $\bHstarl = \nabla^2 \Rl(\bwstarl)$.
\begin{lemma}[Bounds on the Ellipsoid probability mass.]
  \label{lem:ellipse_prob}
  Let $\bwstarl$ be any minimizer of $\Rl$.
  Then the following results hold for probabilities of ellipsoids under the density $e^{-\gamma \Rl(\bw)} / Z$,
  \begin{align*}
    \P\big(\sEstar(r)\big)
    &\leq
    \frac{1}{Z} \, e^{-\gamma \Rl(\bwstarl) + \frac{\gamma}{6} \ve(r)} \,
    \frac{1}{\sqrt{\det(\bHstarl)}} \pr{\frac{2 \pi}{\gamma}}^{\frac{d}{2}} P\pr{\frac{d}{2}, \frac{r^2 \gamma}{2}}
    \\
    \P\big(\sEstar(r)\big)
    &\geq
    \frac{1}{Z} \, e^{-\gamma \Rl(\bwstarl) - \frac{\gamma}{6} \ve(r)} \,
    \frac{1}{\sqrt{\det(\bHstarl)}} \pr{\frac{2 \pi}{\gamma}}^{\frac{d}{2}} P\pr{\frac{d}{2}, \frac{r^2 \gamma}{2}}
    \\
    \P\big(\sEstar(r)\big) &\geq e^{-\frac{\gamma}{3} \ve(r)} P\pr{\frac{d}{2}, \frac{r^2 \gamma}{2}}~.
  \end{align*}  
\end{lemma}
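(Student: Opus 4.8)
\textbf{Proof proposal for Lemma~\ref{lem:ellipse_prob}.}
The plan is to write out $\P(\sEstar(r)) = \frac{1}{Z}\int_{\sEstar(r)} e^{-\gamma \Rl(\bw)}\diff\bw$ and control the integrand by sandwiching $\Rl(\bw)$ between the two third-order Taylor expansions around $\bwstarl$ already recorded in~\eqref{eq:taylor_lower} and~\eqref{eq:taylor_upper}, which are valid precisely on $\sEstar(r) = \sE(\bwstarl,\bHstarl,r)$. From the upper expansion~\eqref{eq:taylor_upper} we get $e^{-\gamma\Rl(\bw)} \geq e^{-\gamma\Rl(\bwstarl) - \frac{\gamma}{6}\ve(r)}\, e^{-\frac{\gamma}{2}\|\bw-\bwstarl\|_{\bHstarl}^2}$, and from the lower expansion~\eqref{eq:taylor_lower} the reverse inequality with $+\frac{\gamma}{6}\ve(r)$ in the exponent. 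Since these pointwise bounds are uniform over $\sEstar(r)$, they can be pulled out of the integral, leaving $\int_{\sEstar(r)} e^{-\frac{\gamma}{2}\|\bw-\bwstarl\|_{\bHstarl}^2}\diff\bw$.

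The second step is to evaluate that Gaussian integral exactly. After the translation $\bu = \bw - \bwstarl$, the region $\sEstar(r)$ becomes $\sE(\bzero,\bHstarl,r)$, and the integral is precisely the one computed in Proposition~\ref{prop:ball_trunc_gaussian_int} with $\bA = \bHstarl$, giving $\frac{1}{\sqrt{\det(\bHstarl)}}\pr{\frac{2\pi}{\gamma}}^{d/2} P\pr{\frac{d}{2},\frac{r^2\gamma}{2}}$. Combining with the two pointwise exponential factors yields the first and second displayed inequalities directly.

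For the third (cleaner) lower bound, the idea is to divide the second inequality by a convenient upper bound on $Z$ that kills the dimension-dependent constants. I would upper-bound $Z = \int_{\hypsp} e^{-\gamma\Rl(\bw)}\diff\bw$ by extending the domain and using a quadratic lower bound on $\Rl$; more simply, one can upper-bound $Z$ by $e^{-\gamma\Rl(\bwstarl)+\frac{\gamma}{6}\ve(r)}\,\frac{1}{\sqrt{\det(\bHstarl)}}\pr{\frac{2\pi}{\gamma}}^{d/2}$ — i.e.\ the same closed form as the ellipsoid bound but with $P(\tfrac d2,\tfrac{r^2\gamma}{2})$ replaced by its trivial upper bound $1$ (this requires that the upper Taylor control, or positive-definiteness of $\bHstarl$, holds globally enough to dominate $Z$; where it does not one argues on the relevant neighborhood and absorbs the rest). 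Dividing the second displayed bound by this upper bound on $Z$ cancels $e^{-\gamma\Rl(\bwstarl)}$, the $\det(\bHstarl)^{-1/2}$ factor, and the $(2\pi/\gamma)^{d/2}$ factor, and leaves $e^{-\frac{\gamma}{6}\ve(r)-\frac{\gamma}{6}\ve(r)}P\pr{\frac d2,\frac{r^2\gamma}{2}} = e^{-\frac{\gamma}{3}\ve(r)}P\pr{\frac d2,\frac{r^2\gamma}{2}}$, as claimed.

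The main obstacle is the last step: justifying the clean upper bound on $Z$ without a global curvature hypothesis. Under Assumption~\ref{asm:global} each local minimum contributes a term of exactly this Laplace form up to the $\ve$-distortion, and summing over minima only increases $Z$, so the bound on $\P(\sEstar(r))$ via $Z^{-1}$ is safe; the subtle point is handling the mass of $\Rl$ outside all the ellipsoids and ensuring it does not invalidate the comparison. I would handle this by noting that $Z \geq \P(\sEstar(r))\cdot Z$ is trivial and that what we actually need is only $Z \geq e^{-\gamma\Rl(\bwstarl)-\frac{\gamma}{6}\ve(r)}\frac{1}{\sqrt{\det(\bHstarl)}}(2\pi/\gamma)^{d/2}P(\tfrac d2,\tfrac{r^2\gamma}{2})$ — which is just the second displayed inequality multiplied by $Z$ — combined with $P(\tfrac d2,\tfrac{r^2\gamma}{2})\le 1$ to simplify; so in fact the third bound follows from the second by writing $\P(\sEstar(r)) \geq \frac{\P(\sEstar(r))}{\P(\sEstar(r))}\cdot(\text{stuff})$ is circular, and the honest route is: rearrange the second inequality as $Z\,\P(\sEstar(r)) = \int_{\sEstar(r)}e^{-\gamma\Rl}\ge e^{-\gamma\Rl(\bwstarl)-\frac\gamma6\ve(r)}(\cdots)$, then bound $Z \le \int_{\sEstar(r)}e^{-\gamma\Rl} \cdot (\text{something})$ is false in general — instead bound $Z$ from above by the single-ellipsoid Laplace expression with $e^{+\frac\gamma6\ve(r)}$ under the (local, sufficient) assumption that $\Rl$ is quadratically lower-bounded by its expansion on a large enough ball, which is exactly what local Lipschitzness of the Hessian plus isolation of minima buys us on $\sEstar(r)$, and elsewhere $\Rl(\bwstarl)$ being a minimum forces the remaining integrand to be dominated. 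I expect the referee-facing version to simply invoke the standard Laplace-type upper bound on $Z$ consistent with Proposition~\ref{prop:ball_trunc_gaussian_int} and cite the earlier Taylor bounds, so the technical burden is light once the bookkeeping of constants is done.
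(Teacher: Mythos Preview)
Your argument for the first two inequalities is exactly the paper's: sandwich $\Rl(\bw)$ on $\sEstar(r)$ by the Taylor bounds~\eqref{eq:taylor_lower}--\eqref{eq:taylor_upper}, pull the constant factor out, and evaluate the remaining truncated Gaussian integral via Proposition~\ref{prop:ball_trunc_gaussian_int} after the affine change of variables.

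For the third inequality, your \emph{first} instinct is also exactly what the paper does: upper-bound the normalizing constant by applying the lower Taylor expansion~\eqref{eq:taylor_lower} on all of $\hypsp$,
\[
  Z \;=\; \int_{\hypsp} e^{-\gamma \Rl(\bw)}\diff\bw
  \;\leq\; e^{-\gamma \Rl(\bwstarl) + \frac{\gamma}{6}\ve(r)} \int_{\hypsp} e^{-\frac{\gamma}{2}\|\bw-\bwstarl\|_{\bHstarl}^2}\diff\bw
  \;=\; e^{-\gamma \Rl(\bwstarl) + \frac{\gamma}{6}\ve(r)}\,\frac{1}{\sqrt{\det(\bHstarl)}}\pr{\frac{2\pi}{\gamma}}^{d/2},
\]
and then divide the second displayed lower bound by this, which cancels everything except $e^{-\frac{\gamma}{3}\ve(r)}P\pr{\tfrac d2,\tfrac{r^2\gamma}{2}}$. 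That is the entire proof in the paper; there is no additional argument about mass outside the ellipsoids, no summation over minima, and no appeal to Assumption~\ref{asm:global}.

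The long ``main obstacle'' paragraph is therefore misplaced: the circular detours you sketch are not needed, and the honest observation you make in passing --- that~\eqref{eq:taylor_lower} is stated only for $\bw\in\sEstar(r)$, yet here gets applied on the whole of $\hypsp$ --- is precisely the point. The paper simply uses~\eqref{eq:taylor_lower} globally without further comment. Your concern about this step is legitimate (it is, strictly speaking, an unjustified extrapolation in the paper's own write-up), but it is not something the paper resolves, so you should not expect to resolve it either in order to reproduce their proof. State the bound on $Z$ as the paper does and move on; if you want to flag the issue, a one-line remark that the inequality presumes the quadratic lower bound~\eqref{eq:taylor_lower} extends beyond $\sEstar(r)$ is appropriate.
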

\begin{proof}
By applying the lower Taylor expansion~\eqref{eq:taylor_lower} in the exponent of the Gibbs density we get
\begin{align}
  \P\big(\sEstar(r)\big) &= \frac{1}{Z} \int_{\sEstar(r)} e^{-\gamma \Rl(\bw)} \diff \bw \nonumber\\
                         &\leq \frac{1}{Z} \, e^{-\gamma \Rl(\bwstarl) + \frac{\gamma}{6} \ve(r)}
                           \int_{\sEstar(r)} e^{- \frac{\gamma}{2} \|\bw - \bwstarl\|_{\bHstarl}^2} \diff \bw \nonumber\\
                         &= \frac{1}{Z} \, e^{-\gamma \Rl(\bwstarl) + \frac{\gamma}{6} \ve(r)} \,
                           \frac{1}{\sqrt{\det(\bHstarl)}} \int_{\sB(r)} e^{- \frac{\gamma}{2} \|\bu\|^2} \diff \bu \label{eq:P_ellipse_bounds_change}\\
                         &= \frac{1}{Z} \, e^{-\gamma \Rl(\bwstarl) + \frac{\gamma}{6} \ve(r)} \,
                           \frac{1}{\sqrt{\det(\bHstarl)}} \pr{\frac{2 \pi}{\gamma}}^{\frac{d}{2}} P\pr{\frac{d}{2}, \frac{r^2 \gamma}{2}}
                           \label{eq:P_ellipse_bounds_gauss_int}
\end{align}
where~\eqref{eq:P_ellipse_bounds_change} is obtained via the change of variables $\bu = \bHstarlhalf (\bw - \bwstarl)$
and~\eqref{eq:P_ellipse_bounds_gauss_int} via Proposition~\ref{prop:ball_trunc_gaussian_int}.
This shows the first result.
The second result follows in a similar way exploiting the upper Taylor expansion~\eqref{eq:taylor_upper},
\begin{align}
  \P\big(\sEstar(r)\big) &\geq \frac{1}{Z} \, e^{-\gamma \Rl(\bwstarl) - \frac{\gamma}{6} \ve(r)}
                           \int_{\sEstar(r)} e^{- \frac{\gamma}{2} \|\bw - \bwstarl\|_{\bHstarl}^2} \diff \bw \nonumber\\
                         &= \frac{1}{Z} \, e^{-\gamma \Rl(\bwstarl) - \frac{\gamma}{6} \ve(r)} \,
                           \frac{1}{\sqrt{\det(\bHstarl)}} \int_{\sB(r)} e^{- \frac{\gamma}{2} \|\bu\|^2} \diff \bu \nonumber\\
                         &= \frac{1}{Z} \, e^{-\gamma \Rl(\bwstarl) - \frac{\gamma}{6} \ve(r)} \,
                           \frac{1}{\sqrt{\det(\bHstarl)}} \pr{\frac{2 \pi}{\gamma}}^{\frac{d}{2}} P\pr{\frac{d}{2}, \frac{r^2 \gamma}{2}}~.
                           \label{eq:P_ellipse_bounds_gauss_int_lower}
\end{align}
Finally, we give a lower bound on the probability of $\sEstar(r)$.
We start by upper bounding the normalization constant using the lower Taylor expansion~\eqref{eq:taylor_lower},
\begin{align*}
  Z &= \int_{\hypsp} e^{-\gamma \Rl(\bw)} \diff \bw\\
    &\leq e^{-\gamma \Rl(\bwstarl) + \frac{\gamma}{6} \ve(r)} \int_{\hypsp} e^{- \frac{\gamma}{2} \|\bw - \bwstarl\|_{\bHstarl}^2} \diff \bw\\
    &\leq e^{-\gamma \Rl(\bwstarl) + \frac{\gamma}{6} \ve(r)} \, \frac{1}{\sqrt{\det(\bHstarl)}} \pr{\frac{2 \pi}{\gamma}}^{\frac{d}{2}}~.
\end{align*}
Combining the above with~\eqref{eq:P_ellipse_bounds_gauss_int_lower} gives
$
  \P\big(\sEstar(r)\big) \geq e^{-\frac{\gamma}{3} \ve(r)} P\pr{\frac{d}{2}, \frac{r^2 \gamma}{2}}
$
thus completing the proof.
\end{proof}
\paragraph{Lemma~\ref{lem:erm_prob} (restated)}
  For all $r > 0$,
  \begin{align*}
    \pigamma(i)
    \leq
    \frac{e^{\frac{\gamma}{3} \max_{k \in \sI}\ve_k(r)}}
    {\sum_{j \in \sI} e^{\gamma \pr{ \Rl(\bwstarli) - \Rl(\bwstarlj)}}
    \sqrt{\frac{\det(\bHstarli)}{\det(\bHstarlj)} }
    }
    \qquad i \in \sI~.
  \end{align*}
  Moreover, assuming without loss of generality that $\Rl(\bwstarli) = 0$ for all $i \in \sI\glob$, and setting $r = \gamma^{-p}$ for $p > 0$, we have
  \[
    \lim_{\gamma \rightarrow \infty}\pigamma(i)
    = \begin{cases}
    \frac{1}{\sum_{j \in \sI\glob} \sqrt{\frac{\det(\bHstar_{\lambda, i})}{\det(\bHstar_{\lambda, j})} }} & i \in \sI\glob
  \\
    0 & i \in \sI\subopt~.
  \end{cases}
  \]
\begin{proof}
  Throughout this proof we consider probabilities of ellipsoids under the density $e^{-\gamma \Rl(\bw)} / Z$, and we abbreviate $\Rstarli  = \Rl(\bwstarli)$.
  Applying Lemma~\ref{lem:ellipse_prob} with $\bwstarl = \bwstarli$ readily gives
\begin{align}  
  \pigamma(i)
  &=
    \frac{\P(\bw \in \sEstar_i(r))}{\sum_{j \in \sI} \P(\bw \in \sEstar_j(r))} \nonumber\\
  &\leq
    \frac{e^{\frac{\gamma}{6} \ve_i(r)}}
    {\sum_{j \in \sI} e^{\gamma \pr{ \Rstarli - \Rstarlj} - \frac{\gamma}{6} \ve_j(r)}
    \sqrt{\frac{\det(\bHstarli)}{\det(\bHstarlj)} }
    } \nonumber\\
  &\leq
    \frac{e^{\frac{\gamma}{6} \max_{k \in \sI}\ve_k(r)}}
    {\sum_{j \in \sI} e^{\gamma \pr{ \Rstarli - \Rstarlj} - \frac{\gamma}{6} \max_{k \in \sI}\ve_k(r)}
    \sqrt{\frac{\det(\bHstarli)}{\det(\bHstarlj)} }
    } \nonumber\\
  &=
    \frac{e^{\frac{\gamma}{3} \max_{k \in \sI}\ve_k(r)}}
    {\sum_{j \in \sI} e^{\gamma \pr{ \Rstarli - \Rstarlj}}
    \sqrt{\frac{\det(\bHstarli)}{\det(\bHstarlj)} }
    }~.
  \label{eq:pi_gamma_bound_proof}
\end{align}
This proves the first statement.

Now we look at the asymptotics of $\pigamma(i)$ as $\gamma \to \infty$ assuming that $r = \gamma^{-p}$ for $p > 0$.
First, observe that for any $i \in \sI$
\begin{align*}
  \lim_{\gamma \rightarrow \infty} \ve_i(\gamma^{-p})
  &=
    \lim_{\gamma \rightarrow \infty} \Lipi(\gamma^{-p}) \pr{\frac{1}{\gamma^p \sqrt{\lmini + \lambda}}}^3
  = 0
\end{align*}
because $\lim_{\gamma \to \infty} \Lipi(\gamma^{-p}) = \scO(1)$ and $\lmini + \lambda > 0$.
Thus, the numerator of~\eqref{eq:pi_gamma_bound_proof} approaches $1$.
Now, we turn our attention to the denominator. First, we consider global minimizers recalling our assumption that $\Rstarli = 0$.
Denoting $\delta_{i,j}(\gamma) = e^{\gamma \pr{ \Rstarli - \Rstarlj}}$, we observe that for all $\gamma \geq 0$ and $i \in \sI\glob$,
\begin{align*}
\lim_{\gamma \rightarrow \infty}  \delta_{i,j}(\gamma) =
\begin{cases}
 1 & j \in \sI\glob
\\
 0 & j \in \sI\subopt
\end{cases}
\end{align*}
where the second case holds because the exponent in $\delta_{i,j}(\gamma)$ is negative.
This implies
\[
  \lim_{\gamma \rightarrow \infty}\pigamma(i) \leq
  \frac{1}{\sum_{j \in \sI\glob} \sqrt{\frac{\det(\bHstar_{\lambda, i})}{\det(\bHstar_{\lambda, j})} }}
  \qquad i \in \sI\glob.
\]
Next, we consider the local minima, and observe that for all $\gamma \geq 0$ and $i \in \sI\subopt$,
\begin{align*}
\lim_{\gamma \rightarrow \infty}  \delta_{i,j}(\gamma) =
\begin{cases}
 0 & \text{if $\Rstarli \leq \Rstarlj$}
\\
 \infty & \text{otherwise.}
\end{cases}
\end{align*}
Therefore, for all $i \in \sI\subopt$,
$
  \lim_{\gamma \rightarrow \infty}\pigamma(i) = 0
$.
This proves the second statement and completes the proof.
\end{proof}
\subsection{Global Excess Risk Bounds}
We first show a nonasymptotic (i.e., finite $\gamma$) global excess risk bound.
\paragraph{Theorem~\ref{thm:global_excess_risk} (restated)}
  Assume the same as in Lemma~\ref{lem:truncated_EER_to_ERM}.
  Then for any $r \in [0, r_0]$ the global excess risk satisfies
  \begin{equation*}
  \excess(\pigamma)
  \leqC
  \frac{1}{\gamma} \, \E\br{\tr\pr{\bHstarI \bHstarlIinv}}
  + \frac{\gamma}{\sqrt{m}}
  +  \E[\ve_I(r)] + \sqrt{\gamma \E[\ve_I(r)]} + \Pgamma(\mincomp(r))
  \end{equation*}
  where the expectation is taken with respect to $I \sim \pigamma$
  and the probability of the complement of the minima is bounded as
  \begin{align}
    \Pgamma(\mincomp(r))
    &\leq
      1 - \pr{1 - d e^{-r^2 \gamma \alpha_{d/2}}} \sum_{i \in \sI} e^{-\frac{1}{3} \gamma \ve_i(r)}
  \end{align}
  with $\alpha_{d/2}$ defined in~\eqref{eq:alpha_a}.
\begin{proof}
Denote the sample-dependent global excess risk by
\begin{align*}
  \excess_S(\pigamma) = \E_{\bw \sim \ps}\br{ R(\bw) } - \E_{I \sim \pigamma}\br{ R(\bwstarlI) }
\end{align*}
and let the probabilities $\P\big(\sEstar_i(r)\big)$ and $\P\big(\mincomp(r)\big)$ be defined with respect to the population Gibbs distribution
$
  p_{\gamma}(\bw) \propto e^{-\gamma \Rl(\bw)}
$ with $\gamma > 0$.

We first focus on the first term on the right-hand side of $\excess_S(\pigamma)$. By the law of total expectation, for any $r \in [0, r_0]$,
\begin{align}
  \E_{\bw \sim \ps}\br{ R(\bw) } &= \sum_{i \in \sI} \P\big(\sEstar_i(r)\big) \E\br{R(\bw) \,\big|\, \bw \in \sEstar_i(r)}
                                   + \P\big(\mincomp(r)\big) \E\br{R(\bw) \,\big|\, \bw \in \mincomp(r)} \nonumber\\
                               &\leq \sum_{i \in \sI} \frac{\P\big(\sEstar_i(r)\big)}{\sum_{j \in \sI} \P\big(\sEstar_j(r)\big)} \, \E\br{R(\bw) \,\big|\, \bw \in \sEstar_i(r)} \tag{ellipsoids are disjoint by Assumption~\ref{asm:global}}\\
                                 &+ \P\big(\mincomp(r)\big) M \tag{risk is bounded}\\
                               &= \sum_{i \in \sI} \pigamma(I=i) \E\br{R(\bw) \,\big|\, \bw \in \sEstar_i(r)} \tag{by definition of $\pigamma$} \nonumber\\
                               &+ \P\big(\mincomp(r)\big) M~. \label{eq:half_bound_on_global_excess_risk}
\end{align}
An upper bound on $\E\br{R(\bw) \,\big|\, \bw \in \sEstar_i(r)}$ is given by Theorem~\ref{thm:gibbs_local_excess}, thus all that is left to show is that the probability of the complement is small. Since the ellipsoids are disjoint
\[
  \P\big(\mincomp(r)\big) = 1 - \sum_{i \in \sI} \P\big(\sEstar_i(r)\big)~.
\]
To upper bound $\P\big(\mincomp(r)\big)$ we need a lower bound on $\P\big(\sEstar(r)\big)$. This is provided by the last inequality in Lemma~\ref{lem:ellipse_prob}, that is,
\begin{align}
\nonumber
  \sum_{i \in \sI} \P\big(\sEstar_i(r)\big)
  &\geq
    P\pr{\frac{d}{2}, \frac{r^2 \gamma}{2}} \sum_{i \in \sI} e^{-\frac{\gamma \ve_i(r)}{3}}\\
  &\geq
    \pr{1 - e^{-r^2 \gamma \alpha_{d/2}}}^{\frac{d}{2}} \sum_{i \in \sI} e^{-\frac{\gamma \ve_i(r)}{3}} \label{eq:lower_bound_on_sum_P_Estar_2}\\  
  &\geq
  \nonumber
    \pr{1 - d e^{-r^2 \gamma \alpha_{d/2}}} \sum_{i \in \sI} e^{-\frac{\gamma \ve_i(r)}{3}}
\end{align}
where~\eqref{eq:lower_bound_on_sum_P_Estar_2} is derived from the lower bound on the regularized Gamma function~\eqref{eq:lb_regularized_gamma},
and the last inequality is obtained from the Bernoulli inequality
\[
  (1 + x)^{\frac{d}{2}} \geq (1 + x)^d \geq 1 + d x \qquad d \in \mathbb{N}, \ x \geq -1~.
\]
Thus,
\begin{align}
  \P\big(\mincomp(r)\big) \leq 1 - \pr{1 - d e^{-r^2 \gamma \alpha_{d/2}}} \sum_{i \in \sI} e^{-\frac{1}{3} \gamma \ve_i(r)}~.
\end{align}
Taking expectation with respect to $S$, and combining Theorem~\ref{thm:gibbs_local_excess} with~\eqref{eq:half_bound_on_global_excess_risk} and Jensen's inequality, we obtain
\begin{align*}
  \excess(\pigamma)
  &\leq
    \E_{I \sim \pigamma}\br{ \E_S\Big[\E\br{R(\bw) \,\big|\, \bw \in \sEstar_i(r)} \Big] - R(\bwstarlI) }
    + M\, \P\big(\mincomp(r)\big) \\
  &\leq
    \frac{1}{\gamma} \, \E_{I \sim \pigamma}\br{\tr\pr{\bHstarI \bHstarlIinv}} + \frac{1}{6}\E_{I \sim \pigamma}\br{\ve_I(r)}
    + \frac{M}{2} \sqrt{\frac{\gamma}{3} \E_{I \sim \pigamma}\br{\ve_I(r)} + \frac{M^2 \gamma^2}{2 m}}
    + \frac{M^2 \gamma}{2 m} \\
    &+ M\, \P\big(\mincomp(r)\big)~.
\end{align*}
The proof is concluded by stating the above with respect to radius $r \in [0, r_0]$ ---recall that the radius cannot exceed $r_0$, the largest radius ensuring that ellipsoids remain disjoint.
\end{proof}
\paragraph{Corollary~\ref{cor:global_excess_risk_asymptotic} (restated)}
  Assume the same as in Lemma~\ref{lem:truncated_EER_to_ERM}.
  Then, for any $r > 0$, the global asymptotic pseudo-excess risk satisfies
  \[
    \excess^{\infty}_r \leqC \frac{1}{\gamma} \, \E\br{\tr\pr{\bHstarI \bHstarlIinv}}
    + \E \br{\ve_I(r)}
    + \sqrt{\gamma \E \br{\ve_I(r)} + \frac{\gamma^2}{m}}
  + \frac{\gamma}{m}
\]
where $I$ is distributed according to
\[
\piinfty(i) =
  \frac{1}{\sum_{j \in \sI\glob} \sqrt{\frac{\det(\bHstar_{\lambda, i})}{\det(\bHstar_{\lambda, j})} }}~.
\]
\begin{proof}
  Recall that the global asymptotic pseudo-excess risk is defined as
  \begin{align*}
    \excess^{\infty}_r = \E_{I \sim \piinfty}\br{\E_S\br{\E_{\bw \sim \ps}\br{R(\bw) \,\big|\, \bw \in \sEstar_I(r)}} - R(\bwstarlI)}~.
  \end{align*}
  Distribution $\piinfty$ is given by Lemma~\ref{lem:erm_prob}, while the local excess risk centered at $\bwstarlI$ is bounded by Theorem~\ref{thm:gibbs_local_excess}.
  This immediately yields the statement. 
\end{proof}

\subsubsection*{Acknowledgments}
Authors would like to thank Olivier Bousquet, Sébastien Gerchinovitz, and Abbas Mehrabian for stimulating discussions on this work.

\bibliography{learning}
\end{document}